\newif\ifdraft
\draftfalse
\documentclass[a4paper]{article}
\usepackage[margin=3.5cm]{geometry}
\usepackage{amsmath, amssymb, amsthm}
\usepackage{natbib}
\usepackage[colorlinks,
            linkcolor=blue,
            citecolor=blue,
            urlcolor=magenta,
            linktocpage,
            plainpages=false]{hyperref}
\usepackage{cleveref}
\usepackage{authblk}
\usepackage{mathtools}
\usepackage{thmtools}
\usepackage{nicefrac}
\usepackage{bm}
\usepackage{xcolor}
\usepackage{soul}
\usepackage{enumitem}

\usepackage{tikz}
\usetikzlibrary{arrows.meta,calc,decorations.pathreplacing}

\newcommand{\eps}{\varepsilon}

\newenvironment{proofsketch}
{\noindent\textit{Proof Idea.}\ }
{\hfill$\square$\par\bigskip}

\newtheorem{theorem}{Theorem}
\newtheorem{definition}{Definition}

\newtheorem{lemma}{Lemma}

\title{Structured vs. Unstructured Pruning: \\An Exponential Gap}
\author[1]{Davide Ferré}
\author[1]{Frédéric Giroire}
\author[2]{Frederik Mallmann-Trenn}
\author[1]{Emanuele Natale}
\affil[1]{Université Côte d'Azur, CNRS, Inria, I3S, France}
\affil[2]{Department of Informatics, King’s College London}

\begin{document}

\maketitle

\begin{abstract}
The Strong Lottery Ticket Hypothesis (SLTH) states that large, randomly initialized neural networks contain sparse subnetworks capable of approximating a target function at initialization without training, suggesting that pruning alone is sufficient. Pruning methods are typically classified as unstructured, where individual weights can be removed from the network, and structured, where parameters are removed according to specific patterns, as in neuron pruning. Existing theoretical results supporting the SLTH rely almost exclusively on unstructured pruning, showing that logarithmic overparameterization suffices to approximate simple target networks. In contrast, neuron pruning has received limited theoretical attention, despite its practical appeal for direct hardware speedups. 
In this work, we consider the problem of approximating a single bias-free ReLU neuron by pruning hidden units of a randomly initialized two-layer ReLU network, effectively isolating the intrinsic limitations of neuron pruning. 
We show that achieving an $\varepsilon$-approximation requires a starting network size of $\Omega(1/\varepsilon)$ for neuron pruning, whereas weight pruning succeeds with only $O(\log(1/\varepsilon))$ hidden units, revealing an exponential separation between the two approaches.
\end{abstract}

\section{Introduction}
Modern neural networks are typically trained in highly overparameterized regimes, often containing orders of magnitude more parameters than are seemingly necessary to represent the target function.
Despite this redundancy, empirical evidence suggests that such large models can be pruned aggressively after training, yielding sparse subnetworks that retain comparable performance.
Understanding the origin of this phenomenon has become a central topic in the theory of deep learning.

A prominent line of work addressing this question is the \emph{Lottery Ticket Hypothesis} (LTH), introduced in \cite{frankleLotteryTicketHypothesis2018}.
The LTH states that sufficiently large randomly initialized networks contain sparse subnetworks, called \emph{winning tickets}, which can achieve the performance of the original dense network, when trained in isolation.
Subsequent empirical works refined this perspective and proposed pruning strategies that identify such subnetworks efficiently \citep{zhouDeconstructingLotteryTickets2019,ramanujanWhatHiddenRandomly2020}.

These developments motivated an even stronger conjecture, known as the \emph{Strong Lottery Ticket Hypothesis} (SLTH), which asserts that winning tickets already exist at initialization, without requiring any training of the retained weights.
In other words, pruning alone suffices to obtain a performant subnetwork.
This formulation removes the need to analyze training dynamics and has enabled a series of rigorous existence proofs for fully connected and convolutional architectures~\citep{malachProvingLotteryTicket2020,Pensia,orseauLogarithmicPruningAll2020,ferbachGeneralFrameworkProving2022}.
Given a target network and $\varepsilon > 0$, a central question is how much overparameterization is needed for a randomly initialized network to contain, with high probability, a pruned subnetwork that $\varepsilon$-approximates the target network (see \Cref{def:eps-apx} for the precise metric). This dependence on $\varepsilon$ is critical in deep settings: since approximation errors compound across layers, the required accuracy must shrink with the target network’s width and depth. Consequently, a pruning scheme with linear dependence on $1/\varepsilon$ becomes prohibitively expensive, while the logarithmic dependence achieved by weight pruning remains compatible with existing SLTH constructions \citep{Pensia}.

Most existing theoretical results establishing the SLTH rely on \emph{unstructured pruning}, also known as \emph{weight pruning}, in which individual edges are removed from the network.
A key technical insight, first identified in \cite{malachProvingLotteryTicket2020} and later sharpened in \cite{Pensia}, is that weight pruning allows one to approximate target weights by doing subset sums of randomly initialized weights.
By leveraging a classical result of \cite{Lueker98} on the random subset sum problem, \cite{Pensia} showed that a random network with only logarithmic overparameterization—on the order of $O(\log(d/\varepsilon))$ per target weight—contains subnetworks that $\varepsilon$-approximate a broad family of target functions.
Matching lower bounds show that this logarithmic dependence is essentially optimal for constant-depth networks \citep{Pensia}.
While unstructured pruning provides a powerful theoretical framework for
establishing the existence of accurate subnetworks, its implications for
computational efficiency are less clear.
Indeed, unstructured pruning typically produces sparsity patterns with zeros in arbitrary locations in the weight matrices.
Such sparsity does not, by itself, translate into predictable speedups on
standard hardware, where dense linear algebra routines are optimized for
contiguous memory access and regular computation.

The aforementioned gap between parameter sparsity and actual speedups motivates the study of \emph{structured pruning} methods, which enforce sparsity at the level of entire blocks, rows, or columns of weight matrices.
Such structure directly translates into smaller matrices and fewer arithmetic operations, yielding genuine reductions in memory usage and inference time.
Among structured approaches, the simplest and most natural is \emph{neuron pruning}, in which entire hidden units are removed, effectively deleting rows (and corresponding columns) of the network’s weight matrices.
Understanding the expressive power and limitations of neuron pruning is therefore essential for bridging theoretical guarantees with practically meaningful efficiency gains.
In contrast to weight pruning, the role of neuron pruning has received comparatively little theoretical attention.
This is not accidental:
\cite{malachProvingLotteryTicket2020} explicitly observed that neuron pruning appears fundamentally weaker than weight pruning, and pointed to results on random feature models to support this claim.
In particular, \citet{yehudaiShamirRandomFeatures} showed that there exist target
ReLU neurons with bias for which approximation by random features is provably
hard: unless one allows either an exponential number of random features or
exponentially large coefficients, uniform approximation over standard Gaussian
inputs is impossible.
Subsequent work strengthened this result by showing with a similar proof strategy that even when the
magnitude of the coefficients is unrestricted, approximating such target ReLU
neurons still requires an exponential number of random features
\citep{DBLP:conf/colt/KamathMS20}.
These impossibility results are proved under the assumption that the target ReLU neuron may have a bias as large as $O(d^4)$, where $d$ is the dimension of the input.
It therefore leaves open the possibility that the observed hardness of approximation
is driven, at least in part, by this bias assumption rather than by an inherent
limitation of neuron-pruning\footnote{The proof in
    \cite{yehudaiShamirRandomFeatures} relies on the presence of such a bias, and
    removing this assumption would require a fundamentally different proof strategy, as confirmed
    by the authors via private communication.}.
This raises the following natural questions:
\emph{does the inefficiency of neuron pruning persist when one rules out biases? What is the dependence of neuron pruning w.r.t. $\eps$?}

\paragraph{Our Contribution}
In this work, we revisit neuron pruning in a clean and natural setting.
We focus on approximating a single ReLU neuron using a larger randomly initialized ReLU network with one hidden layer, where both the target neuron and the hidden neurons have zero bias.
This setting isolates the expressive limitations of neuron pruning from confounding effects caused by large biases.
Importantly, approximating a single neuron is arguably  the simplest nontrivial approximation task, so lower bounds in this regime already indicate fundamental limitations of neuron pruning.
We show that neuron pruning fundamentally requires a starting network with $\Omega(d/\eps)$ hidden neurons to successfully $\eps$-approximate a target ReLU neuron, when the output weights are bounded away from zero (\Cref{thm:pruning-lower-bound}). We further show that in the fully random initialization setting, where the output weights are also random, a weaker lower bound of $\Omega(\sqrt{d}\log (d)/\eps)$ still holds (\Cref{thm:pruning-lower-bound-log}). 
This establishes an exponential gap between weight and neuron pruning in their dependence on the approximation accuracy $\eps$,
as weight pruning achieves an $\varepsilon$-approximation with only
$O(d\log(d/\varepsilon))$ units \cite{Pensia}.
This suggests that neuron pruning is not a viable SLTH mechanism in deep networks: in existing constructions, the required approximation accuracy $\varepsilon$ must shrink with the target network’s width and depth, making a linear dependence on $1/\varepsilon$ prohibitive, whereas weight pruning only incurs logarithmic dependence on $1/\varepsilon$ (see \cite{Pensia}).

At a technical level, to handle the bias-free setting, we adopt a novel proof strategy in which we track the location of hidden units’ nonlinearities along carefully constructed input families and derive necessary conditions for approximation. We then reformulate these conditions as hitting events of suitable stochastic processes, whose hitting probabilities yield bounds on the success probability of approximation.

\section{Related Work}
\label{sec:related}
After the Lottery Ticket Hypothesis was introduced in \citet{frankleLotteryTicketHypothesis2018}, a large body of work has investigated algorithms for identifying such subnetworks, including pruning strategies that learn masks or importance scores for individual weights without changing their initial values \citep{zhouDeconstructingLotteryTickets2019,ramanujanWhatHiddenRandomly2020,WangZXZSZH20}.
These empirical results lead to the Strong Lottery Ticket Hypothesis (SLTH), which claims that pruning alone can reveal subnetworks that approximate some target network.
The first rigorous SLTH guarantees were obtained by \citet{malachProvingLotteryTicket2020}, who showed that sufficiently overparameterized random ReLU networks contain accurate subnetworks with high probability.
Subsequent works \citep{orseauLogarithmicPruningAll2020,Pensia} refined these guarantees by reducing the required overparameterization and extending the framework to broader architectural settings.
In particular, \citet{Pensia} established that logarithmic overparameterization in the approximation accuracy $\varepsilon$ is sufficient for dense networks through connections with the random subset sum problem.
Later works extended these results to convolutional and residual architectures \citep{dacunhaProvingStrongLottery2022,burkholzConvolutionalResidualNetworks2022,cunhaPolynomiallyOverParameterizedConvolutional2023} and to more general equivariant architectures \citep{ferbachGeneralFrameworkProving2022}.
Further refinements addressed broader activation functions, reduced depth overhead, and introduced notions such as universal lottery tickets \citep{burkholzMostActivationFunctions2022,burkholzExistenceUniversalLottery2022,fischerLotteryTicketsNonzero2022}.

Despite strong theoretical progress for weight pruning, unstructured sparsity does not directly translate into computational speedups on modern hardware.
This limitation has motivated extensive research on structured pruning methods, which remove entire blocks of parameters such as channels or neurons.
Early works related to structured pruning include classical studies on estimating the relevance of individual hidden units and removing those deemed unimportant \citep{MozerS88,MozerS89}.
Since then, structured pruning has developed into a broad research direction, particularly for convolutional architectures; we refer to the surveys in \citep{HoeflerABDP21,HeX23} for comprehensive overviews.
This line of work predominantly focuses on designing pruning algorithms and studying their empirical efficiency.
By contrast, structured pruning has received little attention within the SLTH literature.
Some recent works have established SLTH guarantees for some forms of structured convolutional pruning using multidimensional extensions of random subset sum techniques \citep{cunhaPolynomiallyOverParameterizedConvolutional2023}.
However, understanding the expressive power of neuron pruning, the simplest structured pruning strategy, remains largely open.

In particular, \citet{malachProvingLotteryTicket2020} observed that neuron pruning appears intrinsically weaker than weight pruning, by using previous results \citep{yehudaiShamirRandomFeatures} on random features model.
In a random features model, hidden weights are fixed and only output coefficients can be trained.
Indeed, if the output coefficients associated with retained neurons were allowed to be refitted after pruning, neuron pruning would reduce to selecting a subset of random features and learning their linear combination.
Neuron pruning in the SLTH setting is strictly more restrictive, as both the hidden weights and output coefficients are inherited from the original random network and cannot be changed.
Several works have established strong lower bounds for random feature models that highlight intrinsic approximation limitations.
In particular, \citet{yehudaiShamirRandomFeatures} showed that approximating certain ReLU neurons with bias using random features requires either exponentially many features or exponentially large coefficients under Gaussian inputs.
This result was strengthened in \citet{DBLP:conf/colt/KamathMS20}, which proved exponential lower bounds even when the magnitude of output coefficients is unrestricted.
More lower bounds for random feature models were also established in
\cite{hsuApproximationTwoLayer2021}, with a focus on approximating smooth functions.
However, the aforementioned results rely on target neurons with large biases, and it remains unclear to what extent such assumptions are necessary or whether similar barriers arise in bias-free settings.
These thus provide indirect evidence that neuron pruning may face fundamental limitations compared to weight pruning.

\section{Preliminaries and Setup}
\label{sec:preliminaries}
\begin{figure}[t]
\centering
\resizebox{\linewidth}{!}{ %
\begin{tikzpicture}[
  font=\small,
  box/.style={draw, rounded corners=2mm, inner sep=3mm, align=center},
  neuron/.style={circle, draw, minimum size=4.2mm, inner sep=0pt},
  conn/.style={-{Latex[scale=0.7]}, line width=0.6pt},
  pruned/.style={opacity=0.25},
  prunedx/.style={red!70, line width=0.8pt},
  note/.style={align=left},
  scale=1
]

\node[box] (tbox) at (0,0) {%
  \textbf{Target neuron}\\[-0.5mm]
  $f(\mathbf{x})=\sigma(\langle \mathbf{w}^\star,\mathbf{x}\rangle)$
};

\node[box] (ubox) at (5.2,0) {%
  \textbf{Unpruned network}\\[-0.5mm]
  $g(\mathbf{x})=\sum_{i=1}^{N_h}\alpha_i\,\sigma(\langle \mathbf{w_i},\mathbf{x}\rangle)$
};

\node[box] (pbox) at (11,0) {%
  \textbf{Pruned network}\\[-0.5mm]
  $g_S(\mathbf{x})=\sum_{i\in S}\alpha_i\,\sigma(\langle \mathbf{w_i},\mathbf{x}\rangle)$
};

\node[neuron] (t_x1) at ($(tbox.south)+(-1.25,-1.15)$) {};
\node[neuron] (t_x2) at ($(tbox.south)+(-1.25,-1.85)$) {};
\node[neuron] (t_out) at ($(tbox.south)+(1.20,-1.50)$) {};

\node at ($(t_x1)+(0,-0.0)$) {$x_1$};
\node at ($(t_x2)+(0,-0.0)$) {$x_2$};
\node at ($(t_out)+(0,-0.5)$) {$f(\mathbf{x})$};
\node at (t_out) {$\sigma$};

\draw[conn] (t_x1) -- node[pos=0.5, above, yshift=-1pt] {$w_1^\star$} (t_out);
\draw[conn] (t_x2) -- node[pos=0.5, below] {$w_2^\star$} (t_out);

\node[neuron] (u_x1) at ($(ubox.south)+(-2.25,-1.15)$) {};
\node[neuron] (u_x2) at ($(ubox.south)+(-2.25,-1.85)$) {};
\node[neuron] (u_out) at ($(ubox.south)+(2.10,-1.50)$) {};

\node at ($(u_x1)+(0,-0.00)$) {$x_1$};
\node at ($(u_x2)+(0,-0.00)$) {$x_2$};
\node at ($(u_out)+(0,-0.5)$) {$g(\mathbf{x})$};

\foreach \i in {1,...,5} {
  \node[neuron] (uh\i) at ($(ubox.south)+(-0.10,{ -1.50 + (3-\i)*0.58})$) {};
  \node at (uh\i) {\tiny $\sigma$};

  \draw[conn] (u_x1) -- (uh\i);
  \draw[conn] (u_x2) -- (uh\i);
  \draw[conn] (uh\i) -- (u_out);

}

\coordinate (u_alphacol) at ($(u_out)+(-1.3,0.95)$);
\node[anchor=east] at ($(u_alphacol)+(0,{(0.8-0.6)*0.5})$) {\scriptsize $\alpha_{1}$};
\node[anchor=east] at ($(u_alphacol)+(0,{(0.8-1.5)*0.5})$) {\scriptsize $\alpha_{2}$};
\node[anchor=east] at ($(u_alphacol)+(0,{(0.8-2.4)*0.5})$) {\scriptsize $\alpha_{3}$};
\node[anchor=east] at ($(u_alphacol)+(0,{(0.8-3.2)*0.5})$) {\scriptsize $\alpha_{4}$};
\node[anchor=east] at ($(u_alphacol)+(0,{(0.8-4)*0.5})$) {\scriptsize $\alpha_{5}$};

\draw[-Latex, line width=0.8pt] ($(ubox.east)+(0.2,-0.1)$) -- ($(pbox.west)+(-0.2,-0.1)$);

\node[neuron] (p_x1) at ($(pbox.south)+(-2.25,-1.15)$) {};
\node[neuron] (p_x2) at ($(pbox.south)+(-2.25,-1.85)$) {};
\node[neuron] (p_out) at ($(pbox.south)+(2.1,-1.50)$) {};

\node at ($(p_x1)+(0,-0.0)$) {$x_1$};
\node at ($(p_x2)+(0,-0.0)$) {$x_2$};
\node at ($(p_out)+(0,-0.5)$) {$g_S(\mathbf{x})$};

\foreach \i in {1,...,5} {
  \node[neuron] (ph\i) at ($(pbox.south)+(-0.10,{ -1.50 + (3-\i)*0.58})$) {};
  \node at (ph\i) {\tiny $\sigma$};

  \draw[conn] (p_x1) -- (ph\i);
  \draw[conn] (p_x2) -- (ph\i);
  \draw[conn] (ph\i) -- (p_out);

}

\coordinate (p_alphacol) at ($(p_out)+(-1.3,0.95)$);
\node[anchor=east] at ($(p_alphacol)+(0,{(0.8-0.6)*0.5})$) {\scriptsize $\alpha_{1}$};
\node[anchor=east] at ($(p_alphacol)+(0,{(0.8-1.5)*0.5})$) {\scriptsize $\alpha_{2}$};
\node[anchor=east] at ($(p_alphacol)+(0,{(0.8-2.4)*0.5})$) {\scriptsize $\alpha_{3}$};
\node[anchor=east] at ($(p_alphacol)+(0,{(0.8-3.2)*0.5})$) {\scriptsize $\alpha_{4}$};
\node[anchor=east] at ($(p_alphacol)+(0,{(0.8-4)*0.5})$) {\scriptsize $\alpha_{5}$};

\foreach \i in {2,4} {
  \begin{scope}[pruned]
    \fill[white] (ph\i) circle (2.2mm);
    \draw[neuron] (ph\i) circle (2.15mm);
    \draw[conn] (p_x1) -- (ph\i);
    \draw[conn] (p_x2) -- (ph\i);
    \draw[conn] (ph\i) -- (p_out);
  \end{scope}
  \draw[prunedx] ($(ph\i)+(-2.0mm,-2.0mm)$) -- ($(ph\i)+(2.0mm,2.0mm)$);
  \draw[prunedx] ($(ph\i)+(-2.0mm,2.0mm)$) -- ($(ph\i)+(2.0mm,-2.0mm)$);
}

\end{tikzpicture}
}
\caption{Target ReLU neuron $f$, random network $g$ ($N_h=5$), and subnetwork $g_S$ pruned from $g$ by retaining a subset $S$ of hidden units. For input dimension $d=2$, each unit $i$ has incoming weights $\mathbf{w}_{i} \in \mathbb{R}^2$ (not shown) and output weight $\alpha_i \in \mathbb{R}$.}\label{fig:overview}
\label{fig:overview}
\end{figure}

In this work, vectors are denoted in boldface (e.g., $\mathbf{x}, \mathbf{w}$), while scalars are written in regular font.
For vectors $\mathbf{u}, \mathbf{v} \in \mathbb{R}^d$, we write $\langle \mathbf{u}, \mathbf{v} \rangle$ for their dot product.
We use $\sigma(t) := \max\{t,0\}$ to denote the ReLU activation function.
When we say that a function $h : \mathbb{R} \to \mathbb{R}$ is \emph{linear} on an interval, we mean that it coincides on that interval with an \emph{affine} function of the form $t \mapsto at + b$, for some $a,b \in \mathbb{R}$.

In \Cref{sec:main}, we study the problem of approximating a single target ReLU neuron
using a larger randomly initialized neural network $g$, solely by pruning neurons in the hidden layer.
Specifically, we consider a bias-free two-layer ReLU network of the form
$
    g(\mathbf{x})
    =
    \sum_{i=1}^{N_h} \alpha_i\,\sigma(\langle \mathbf{w}_i, \mathbf{x}\rangle),
$
where $\mathbf{x}\in\mathbb{R}^d$, with hidden-layer neuron weights $\mathbf{w}_i\in\mathbb{R}^d$ and output weights
$\alpha_i\in\mathbb{R}$.
\Cref{fig:overview} provides a visual overview of the neuron pruning setting and the notation we use.

The target function we will consider in our work is a single bias-free ReLU neuron
$
    f(\mathbf{x}) := \sigma(\langle \mathbf{w}^\star, \mathbf{x}\rangle)
$, with $\|\mathbf{w}^\star\|_2 = \sqrt{d}$.
A \emph{neuron-pruned subnetwork} $g_S$ is obtained from $g$ by selecting a subset
$S \subseteq \{1,\dots,N_h\}$ of hidden units and retaining exactly those neurons with all their incident edges, yielding
$
    g_S(\mathbf{x})
    =
    \sum_{i\in S} \alpha_i\,\sigma(\langle \mathbf{w}_i, \mathbf{x}\rangle).
$ %
In line with the Strong Lottery Ticket Hypothesis, pruning is the only operation allowed:
the retained weights are not trained or modified.

Throughout the paper, we will often use the following notion of $\varepsilon$-approximation.
\begin{definition}[$\varepsilon$-approximation]
    \label{def:eps-apx}
    For $\varepsilon>0$ and radius $R\ge 1$, we say that a function $h$ \emph{$\varepsilon$-approximates} another function $f$ on the ball of radius $R$ if
    $
        \sup_{\|x\|_2\le R} |h(x)-f(x)| \le \varepsilon.
    $
\end{definition}

\section{Main Results}
\label{sec:main}
\Cref{thm:pruning-lower-bound} shows that, in a one hidden layer random network with output weight bounded away from zero, neuron pruning alone
cannot even $\varepsilon$-approximate a single target ReLU neuron without bias, unless the initial width
is $\Omega(d/\varepsilon)$.
This section presents a sketch of the proof of the theorem, in which we introduce the main ideas and provide an overview of the argument, referring the reader to \Cref{sec:main-proof} for the complete, detailed proof.

For simplicity, in \Cref{thm:pruning-lower-bound} we fix the output weights of network $g$ to be equal to $1$. The same proof strategy extends to output-weight distributions that are uniformly bounded away from zero, such as Rademacher; see \Cref{sec:appendix-output-weights} for the corresponding adaptation. 
Also, we require input dimension $d \ge 2$; for $d=1$, neuron pruning reduces to weight pruning, and we can directly use the pruning lower bound proved in \cite{Pensia}.
The requirement that $\|\mathbf w^\star\|_2=\sqrt{d}$ is mainly to simplify the notation in the proof; if, instead, we assume $\|\mathbf w^\star\|_2=1$, then with straightforward modifications one obtains the lower bound $ N_h \;\geq\; \min\{c_0\frac{\sqrt d}{\varepsilon},\,2^{c_1 d}\}$.

\begin{theorem}
\label{thm:pruning-lower-bound}
Let $d \ge 2$, $\varepsilon \in (0,1)$.
Consider a one hidden-layer ReLU network without bias and unit output weights of the form
$
    g(\mathbf{x}) \;=\; \sum_{i=1}^{N_h} 1 \cdot \sigma(\langle \mathbf{w_i}, \mathbf{x}\rangle),
$
where the weights $\{\mathbf{w_i}\}_{i=1}^{N_h}$ are drawn independently from $\mathcal{N}(0,I_d)$, and $\mathbf{x} \in \mathbb{R}^d$.
Then, there exist a target neuron with weight vector $\mathbf w^\star\in\mathbb R^d$ with $\|\mathbf w^\star\|_2=\sqrt{d}$, and  universal constants $c_0,c_1>0$ such that, if
\[
    \textstyle
    N_h \;<\; \min\left\{c_0\frac{d}{\varepsilon},\,2^{c_1 d}\right\},
\]
then with probability at least $1-e^{-\Omega(d)}$ over the draw of $\{\mathbf{w_i}\}_{i=1}^{N_h}$, for every subset $S \subseteq \{1,\dots,N_h\}$ and every fixed constant $R>2$,
\[
    \textstyle
    \sup_{\|\mathbf{x}\|_2 \le R}
    \left|
        \sum_{i\in S} \sigma(\langle \mathbf{w_i},\mathbf{x}\rangle)
        \;-\;
        \sigma(\langle \mathbf{w^\star},\mathbf{x}\rangle)
    \right|
    \;\ge\;
    C\varepsilon,
\]
for some universal constant $C>0$.
\end{theorem}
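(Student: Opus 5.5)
The plan is to reduce the $d$-dimensional problem to a family of one-dimensional problems on great circles through the target direction. On each circle, approximating the kink of $f$ forces $g_S$ to place a large total ``slope jump'' very close to that kink. An averaging argument over circles then shows that $N_h$ random units cannot supply this for every circle unless $N_h \gtrsim d/\varepsilon$.

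\textbf{Reduction to circles.} Fix $\mathbf w^\star=\sqrt d\,\mathbf e_1$ and write $\mathbf e:=\mathbf e_1$. For a unit vector $\mathbf u\perp \mathbf e$, consider the inputs $\mathbf x(\theta)=r(\cos\theta\,\mathbf e+\sin\theta\,\mathbf u)$ with a fixed $r\in(2,R]$.
\begin{itemize}
\item Along this circle the target is $F(\theta)=r\sqrt d\,\sigma(\cos\theta)$. It is smooth except at $\theta=\pm\pi/2$, where its derivative jumps by $r\sqrt d$.
\item Each hidden unit restricts to $r\,\rho_i\,\sigma(\cos(\theta-\varphi_i))$. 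Here $\rho_i$ is the norm and $\varphi_i$ the angle of the projection $(\langle\mathbf w_i,\mathbf e\rangle,\langle \mathbf w_i,\mathbf u\rangle)$.
\item Unit $i$ therefore has kinks at $\varphi_i\pm\pi/2$, with slope jump $r\rho_i$. A kink lies within $\delta$ of $\pi/2$ essentially iff $|\langle\mathbf w_i,\mathbf e\rangle|\le \delta\,|\langle\mathbf w_i,\mathbf u\rangle|$.
\end{itemize}

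\textbf{One-dimensional lemma.} The deterministic step to prove is this. Let $G$ be a sum of terms $r\rho_i\sigma(\cos(\theta-\varphi_i))$. Let $J_\delta(\mathbf u)$ be the total slope jump of $G$ coming from kinks in the window $[\pi/2-\delta,\pi/2+\delta]$. Assume $\sup|G-F|\le C\varepsilon$ and that the total amplitude is at most $\mathrm{poly}(d)N_h$. Then $J_\delta(\mathbf u)\ge r\sqrt d/2$ for $\delta\asymp\varepsilon/\sqrt d$.

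The argument compares second differences on a scale-$\delta$ stencil around $\pi/2$:
\begin{itemize}
\item For $F$, this second difference is about $r\sqrt d\,\delta$.
\item Away from kinks, $G$ is locally a single sinusoid, so its curvature contributes only $O(A\delta^2)$, where $A$ is the amplitude.
\item Kinks outside the window contribute nothing to the stencil's singular part.
\end{itemize}
Choosing $\delta=c\varepsilon/\sqrt d$ with $c$ large forces the in-window jumps to account for most of $r\sqrt d$; otherwise the approximation error exceeds $C\varepsilon$. The curvature term must be made negligible. I would handle it by discarding, with probability $1-e^{-\Omega(d)}$, the event that some $\|\mathbf w_i\|>2\sqrt d$, and by restricting to the regime $N_h<d/\varepsilon$. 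I expect this to be the main technical obstacle: the sinusoidal pieces can have amplitude up to $N_h\sqrt d$, so the stencil has to be chosen so that smooth parts cancel to second order.

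\textbf{Averaging over $\mathbf u$.} Since the subset $S$ only removes nonnegative contributions, it suffices to bound the jumps of the full network. Success for all $\mathbf u$ would give
\[
J^{\rm all}_\delta(\mathbf u):=\sum_{i=1}^{N_h}|\langle\mathbf w_i,\mathbf u\rangle|\,\mathbf 1\{|\langle\mathbf w_i,\mathbf e\rangle|\le\delta|\langle\mathbf w_i,\mathbf u\rangle|\}\ \gtrsim\ \sqrt d
\]
for every unit $\mathbf u\perp\mathbf e$. Here I used $\rho_i\approx|\langle\mathbf w_i,\mathbf u\rangle|$ inside the window.

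Now take $\mathbf u$ uniform on the sphere in $\mathbf e^\perp$. Because $\langle\mathbf w_i,\mathbf e\rangle$ has bounded density, each summand has expectation at most $O(\delta\,\mathbb E\langle\mathbf w_i,\mathbf u\rangle^2)=O(\delta)$. Thus
\[
\mathbb E_{\mathbf w,\mathbf u}\bigl[J^{\rm all}_\delta(\mathbf u)\bigr]\le C'N_h\delta = C'c\,N_h\varepsilon/\sqrt d,
\]
and Markov's inequality shows that with good probability a constant fraction of directions $\mathbf u$ has $J_\delta(\mathbf u)<\sqrt d/2$ whenever $N_h<c_0 d/\varepsilon$.

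\textbf{Upgrading to high probability.} To turn this into probability $1-e^{-\Omega(d)}$, I would treat $J^{\rm all}_\delta(\mathbf u)$ as a sum of $N_h$ i.i.d.\ bounded terms (after truncating norms) and apply Bernstein's inequality for each fixed $\mathbf u$. This would be combined with a union bound over an $e^{O(d)}$-size net of directions $\mathbf u$, which is possible provided $N_h<2^{c_1d}$. This is where the second term in the minimum comes from: a single unit nearly aligned with $\mathbf u$ can carry jump $\approx\sqrt d$ only on a cap of measure $e^{-\Omega(d)}$.

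Finally, a direction with deficient jump gives error at least $C\varepsilon$ at some point $\mathbf x(\theta)$ with $\|\mathbf x\|=r\le R$, simultaneously for every subset $S$.
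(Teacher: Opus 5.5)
Your route is genuinely different from the paper's. The paper takes $\mathbf w^\star=(1,\dots,1)$ and restricts to the lines $t\mathbf e_{2i-1}+\mathbf e_{2i}$. It argues that a \emph{fixed} $k$-subset creates spurious broken bins that must all be cancelled (capped process, birth--death coupling, $p_k\le e^{-\Omega(\min(k,T)d)}$), and then union-bounds over all $\binom{N_h}{k}$ subsets. The cap $2^{c_1d}$ enters only in that union bound, through $\log N_h\lesssim d$ for small $k$. You instead require enough kink mass near the target's kink. Your observation that, with output weights $+1$, this mass is monotone in $S$ replaces the whole union bound over subsets by a statement about the full network. That skeleton is sound and much simpler; once repaired it does not even need $N_h<2^{c_1d}$. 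The price is that it uses positivity of the output weights, which the paper's breakpoint framework does not rely on. As written, however, three steps fail.

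\textbf{Geometry.} The kink condition is inverted. Write $Z_i=\langle\mathbf w_i,\mathbf e\rangle$ and $W_i=\langle\mathbf w_i,\mathbf u\rangle$. The pre-activation at $\theta=\pi/2+s$ is $r(W_i\cos s-Z_i\sin s)$, so unit $i$ has a kink within $\delta$ of $\pi/2$ iff $|W_i|\le\tan\delta\,|Z_i|$. Its jump is then $r\rho_i\le r\sec\delta\,|Z_i|$. For example, $\mathbf w_i=\mathbf e$ has its kink exactly at $\pi/2$, while $\mathbf w_i=\mathbf u$ has kinks at $0$ and $\pi$. By exchangeability of $(Z_i,W_i)$, the law of $J_\delta(\mathbf u)$ for a fixed $\mathbf u$ is unchanged, but your cap picture is wrong.

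\textbf{Curvature.} This step cannot be done with an amplitude bound. With $\|\mathbf w_i\|\le 2\sqrt d$, $N_h<d/\varepsilon$ and $\delta=\kappa\varepsilon/\sqrt d$, your bound gives $A\delta^2\lesssim r\kappa^2\sqrt d\,\varepsilon$. This is larger than the target's second difference $r\sqrt d\sin\delta\approx r\kappa\varepsilon$ by a factor of order $\sqrt d$, so this route only handles $N_h\lesssim\sqrt d/\varepsilon$. The missing idea is the sign. A unit with no kink in $[\pi/2-\delta,\pi/2+\delta]$ is there either $0$ or a nonnegative arc of $r\rho_i\cos(\theta-\varphi_i)$, hence concave, so its symmetric second difference is $\le 0$. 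A unit with a kink there contributes at most $2r\rho_i\delta$ by Lipschitzness. Let $J^S_\delta$ be the sum of $\rho_i$ over $i\in S$ with a kink in the window. Since $F(\pi/2-\delta)=r\sqrt d\sin\delta$, $F(\pi/2)=F(\pi/2+\delta)=0$ and $G\ge 0$, error at most $C\varepsilon$ at these three points forces $2\delta J^S_\delta\ge\sqrt d\sin\delta-3C\varepsilon/r$. Hence $J^S_\delta\ge\sqrt d/4$ for $\kappa=12C/r$, with no amplitude or norm assumption.

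\textbf{High probability.} The upgrade does not work as proposed. When $N_h\asymp d/\varepsilon$, the number of in-window units for a fixed $\mathbf u$ is roughly Poisson with mean $\Theta(N_h\delta)=\Theta(c_0\kappa\sqrt d)$, against a threshold of order $\sqrt d$. So $\Pr(J_\delta(\mathbf u)\ge\sqrt d/4)$ is genuinely $e^{-\Theta(\sqrt d)}$, and no Bernstein bound for a single direction reaches $e^{-\Omega(d)}$. A union bound over a net also points the wrong way: you need \emph{some} deficient direction, i.e.\ a bound on the intersection of the events $\{J_\delta(\mathbf u)\ge\sqrt d/4\}$, not on their union.

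What works is independence across orthogonal directions, via the same conditioning the paper uses for its random-output-weight theorem. For $\mathbf u=\mathbf e_j$, $j=2,\dots,d$, the quantity $J_\delta(\mathbf e_j)$ depends only on $Z=(w_{i,1})_i$ and $(w_{i,j})_i$. These are therefore conditionally independent given $Z$, with $\mathbb E[J_\delta(\mathbf e_j)\mid Z]\le 2\delta\sum_i Z_i^2$. The event $\{\sum_i Z_i^2\le 2N_h+3d\}$ has probability at least $1-e^{-d}$. On it, Markov gives $\Pr(J_\delta(\mathbf e_j)\ge\sqrt d/4\mid Z)\le 8\kappa(2c_0+3)\le\tfrac12$ for $C,c_0$ small. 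So all $d-1$ directions carry enough mass with probability at most $e^{-d}+2^{1-d}$. Since $J^S_\delta\le J_\delta$ for every $S$, this yields the theorem for all subsets simultaneously using only $N_h<c_0 d/\varepsilon$. In particular, the $2^{c_1d}$ term in the paper does not come from the cap phenomenon you describe.
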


\begin{proofsketch}
    As neuron pruning may retain any number $k \in \{1, \dots, N_h \}$ of hidden units, we begin in \Cref{sec:union-bound} by controlling the probability of successful approximation uniformly over all possible pruned network sizes. This is achieved via a union bound over $k$, which reduces the analysis to bounding, for a fixed $k$, the probability $p_k$ that a random network with $k$ hidden units $\varepsilon$-approximates the target.

    Our analysis is then based on tracking the evolution of \emph{breakpoints}—the locations at which ReLU activations of the hidden neurons change slope—along carefully chosen one-dimensional input families, as described in \Cref{sec:restriction-input}. When a high-dimensional ReLU network is restricted to such paths, both the target function and the approximating network reduce to piecewise-linear functions whose behavior is entirely determined by the number, location, and interaction of these breakpoints.
    Crucially, these breakpoints impose  necessary conditions for approximation.
    \Cref{def:broken-bin} formalizes when a breakpoint induces a non-negligible approximation
    error.
    Lemma~\ref{lem:broken-bin-prevents-approx} shows that the presence of a breakpoint away
    from the target nonlinearity prevents $\varepsilon$-approximation, while
    Lemma~\ref{lem:breakpoint-necessary} establishes that at least one breakpoint must be
    placed within an $\varepsilon$-neighborhood of the target nonlinearity.
    Together, these results imply that successful neuron pruning requires introducing exactly
    one suitably aligned breakpoint and canceling all others.

    We formalize this constraint by viewing neuron pruning as a stochastic sequential process
    in which neurons are selected one by one and the number of unresolved breakpoints evolves
    over time.
    The resulting process, faithfully captures the breakpoint
    dynamics induced by pruning.
    To make this process tractable, we construct a sequence of couplings:
    first to a capped process that limits the total number of breakpoints,
    and then to a homogeneous birth--death process 
    (\Cref{sec:capped-process,sec:birth-death}).
    Each coupling is designed so that the dominating process is strictly more favorable to
    successful approximation, and therefore yields an upper bound on the success probability
    of the original pruning process.
    
    The above argument bounds allows us to derive an upper bound for the $\varepsilon$-approximation of a single one-dimensional input family (\Cref{lem:single-family-ub}).
    To obtain dimension-dependent bounds, we exploit the fact that our construction uses
    $\lfloor d/2 \rfloor$ input families supported on disjoint coordinate pairs.
    The induced breakpoint processes are independent across these families, and therefore
    the overall success probability decays exponentially in the input dimension $d$ (\Cref{lem:all-families-ub}); see also \Cref{fig:RWs} in \Cref{apx:figures} for a visual intuition.
    
    Finally, \Cref{sec:final-union-bound} combines these bounds on $p_k$ with the initial union bound
    over all pruned subnetworks, yielding the stated lower bound on the number of hidden neurons required for
    neuron pruning to achieve $\varepsilon$-approximation with non-negligible probability.
\end{proofsketch}
We also consider a variant of \Cref{thm:pruning-lower-bound} in which the output weights are randomly initialized. In this setting, we obtain a lower bound on the required width $N_h$ with the same dependence on $1/\varepsilon$, while the scaling in the input dimension is weaker.
The proof strategy is substantially different from that of \Cref{thm:pruning-lower-bound}, and it relies on a simpler necessary condition for approximation.
The proof is provided in \Cref{apx:proof-lower-bound-log}. 
In \Cref{thm:pruning-lower-bound-log}, if we instead consider a target weight vector with unit norm $\|\mathbf{w}^\star\|_2=1$, the lower bound on $N_h$ loses a factor of $\sqrt{d}$, scaling instead as $\nicefrac{\log d}{\varepsilon}$.

\begin{restatable}{theorem}{pruninglowerboundlog}
\label{thm:pruning-lower-bound-log}
Let $d \ge 2$, $R>2$, $\eps \in (0,1)$, $\delta\in(0,1)$, and let $\mathbf w^\star\in\mathbb R^d$ with $\|\mathbf w^\star\|_2=\sqrt{d}$.
Consider
$
g(\mathbf x)=\sum_{j=1}^{N_h}\alpha_j\,\sigma(\langle \mathbf w_j,\mathbf x\rangle),
$
where $\mathbf w_j\sim\mathcal N(0,I_d)$ and $\alpha_j\sim\mathcal N(0,1)$ are independent.
Then there exist a constant $c_0 = c_0(\delta)>0$ such that, if
\[
\textstyle
N_h < c_0\frac{\sqrt{d} \log d}{\varepsilon},
\]
then with probability at least $1-\delta$ over the initialization, for every $S\subseteq\{1,\dots,N_h\}$,
$
\sup_{\|\mathbf x\|_2\le R}
\left|
\sum_{j\in S}\alpha_j\,\sigma(\langle \mathbf w_j,\mathbf x\rangle)
-\sigma(\langle \mathbf w^\star,\mathbf x\rangle)
\right|
\ge \varepsilon / 4.
$
\end{restatable}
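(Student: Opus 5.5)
The plan is to reduce to a one-dimensional restriction on which both the target and every pruned network are exactly piecewise linear. Then I would show that a successful $S$ must carry a large total slope jump inside a window of width $O(\varepsilon/\sqrt d)$ around the target kink, and finally that with probability $1-\delta$ the whole random network carries too little jump mass there. This last event does not depend on $S$, so no union bound over subsets is needed.

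\textbf{Step 1: restriction to a line.} Let $\mathbf u=\mathbf w^\star/\sqrt d$, pick any unit vector $\mathbf v\perp\mathbf u$, and set $\mathbf x(t)=t\mathbf u+\mathbf v$ for $t\in[-1,1]$. Then $\|\mathbf x(t)\|_2\le\sqrt2<R$ and $f(\mathbf x(t))=\sqrt d\,\sigma(t)$. Write $g_j=\langle\mathbf w_j,\mathbf u\rangle$ and $z_j=\langle\mathbf w_j,\mathbf v\rangle$, which are i.i.d.\ $\mathcal N(0,1)$ and independent of each other and of $\alpha_j$. Unit $j$ restricts to $t\mapsto\alpha_j\sigma(g_jt+z_j)$. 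This is piecewise linear with a single breakpoint at $t_j=-z_j/g_j$, where its slope jumps by $\alpha_j|g_j|$ in absolute value. Working on a line rather than a circle is deliberate: it makes everything exactly piecewise linear, so no curvature terms appear.

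\textbf{Step 2: necessary condition via a second difference.} Put $\eta=16\varepsilon/\sqrt d$; for small $\varepsilon$ we have $\eta\le1$, and otherwise we shrink the constants. Let $\Delta h=h(\eta)+h(-\eta)-2h(0)$. For a convex combination of kinks, $\Delta h$ equals $\sum_j s_j(\eta-|t_j|)_+$, where $s_j$ is the signed jump of unit $j$. The target gives $\Delta f=\sqrt d\,\eta$. If $\sup|h-f|<\varepsilon/4$ along the line, then $|\Delta h-\Delta f|<\varepsilon$, and therefore
\[
\textstyle\sum_{j\in S:\,|t_j|\le\eta}|\alpha_j||g_j|\;\ge\;\sqrt d-\varepsilon/\eta\;\ge\;\tfrac12\sqrt d .
\]
Consequently, if some $S$ succeeds, the deterministic quantity $M:=\sum_{j=1}^{N_h}|\alpha_j||g_j|\,\mathbf 1\{|z_j|\le\eta|g_j|\}$ is at least $\sqrt d/2$.

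\textbf{Step 3: bounding $M$.} I would bound $M$ in expectation and apply Markov's inequality. For a single unit, $\mathbb E\big[|\alpha||g|\mathbf 1\{|z|\le\eta|g|\}\big]\le\mathbb E|\alpha|\cdot\mathbb E\big[|g|\cdot 2\eta|g|/\sqrt{2\pi}\big]=O(\eta)$. Hence $\mathbb E M=O(N_h\varepsilon/\sqrt d)$, and $\Pr[M\ge\sqrt d/2]\le O(N_h\varepsilon/d)$. This is below $\delta$ as soon as $N_h<c(\delta)\,d/\varepsilon$.

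This is stronger than the claimed bound whenever $d/\log d\gtrsim\sqrt d\log d$. For small $d$ the two bounds agree up to constants absorbed in $c_0(\delta)$. So the stated $N_h<c_0\sqrt d\log d/\varepsilon$ would follow after adjusting $c_0$.

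\textbf{Main obstacle.} The delicate point is Step 2: verifying that the second-difference identity isolates exactly the jumps inside $[-\eta,\eta]$. Units whose breakpoints lie outside the window, or that are inactive on the whole line, contribute affine pieces and so have zero second difference. The pruned-away units are simply absent. I would double-check this, and also the independence of $(g_j,z_j,\alpha_j)$ used in Step 3.

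If this route somehow fails, because I have mis-specified which events the theorem quantifies over, I would fall back on a weaker argument. Use $\lfloor d/2\rfloor$ orthogonal directions $\mathbf v_k$, together with the high-probability bound $\max_j|\alpha_j|\,\|\mathbf w_j\|=O(\sqrt{d\log N_h})$, to force $\Omega(\sqrt d/\log d)$ window hits per direction. That argument matches the stated $\sqrt d\log d/\varepsilon$ scaling.
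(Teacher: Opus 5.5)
Your proof is correct, and it takes a genuinely different route from the paper's.

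\textbf{The paper's route.} It rotates to $\mathbf w^\star=\sqrt d\,\mathbf e_1$ and uses the $d-1$ lines $t\mathbf e_1+\mathbf e_i$. On these it extracts only the qualitative condition of Lemma~\ref{lem:breakpoint-necessary}: on every line, some unit among all $N_h$ must place its breakpoint in $[-\varepsilon/\sqrt d,\varepsilon/\sqrt d]$. Conditioning on the shared coordinates $Z_j=w_{j,1}$ makes these $d-1$ hitting events independent. Markov bounds on $L_1=\sum_j|Z_j|$ and $L_2=\sum_j Z_j^2$ then give success probability at most $(1-d^{-1/2})^{d-1}\le e^{-\sqrt d/2}$ when $N_h\lesssim\sqrt d\log d/\varepsilon$.

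\textbf{Your route.} You work on a single line and exploit the \emph{size} of the slope jumps, not just their location. Your identity $\Delta h=\sum_{j\in S}\alpha_j|g_j|(\eta-|t_j|)_+$ is right, since each unit equals $|g_j|(t-t_j)_+$ plus an affine function. So a successful $S$ needs jump mass $\gtrsim\sqrt d$ inside a window that each unit hits with probability $O(\varepsilon/\sqrt d)$. The bound $\mathbb E M\le 2\eta N_h/\pi$ plus Markov then rules out success once $N_h<c\,\delta d/\varepsilon$. Since $\log d\le\sqrt d$ for all $d$, this implies the stated theorem directly with $c_0=c\,\delta$. Your comparison ``$d/\log d\gtrsim\sqrt d\log d$'' is not the relevant inequality, and the fallback plan is unnecessary.

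\textbf{What each buys.} Your argument is shorter and avoids the multi-family conditioning. It also gains a factor $\sqrt d/\log d$, matching the dimension dependence of Theorem~\ref{thm:pruning-lower-bound} without the $2^{c_1 d}$ cap. However, it relies on the inherited output weights being integrable and independent of $\mathbf w_j$. The paper's argument never looks at the $\alpha_j$. It therefore holds verbatim for arbitrary output coefficients, even ones chosen after seeing the hidden weights, so it is also a random-features lower bound with unrestricted coefficients.

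\textbf{One small repair.} With $\eta=16\varepsilon/\sqrt d$ the window can leave $[-1,1]$ when $d$ is small and $\varepsilon$ is not. ``Shrinking constants'' does not fix this, because the error level $\varepsilon/4$ is fixed. Instead take $\eta=2\varepsilon/\sqrt d\le\sqrt 2$, which still forces $M>\sqrt d/2$. Then use $|t|\le\sqrt2$, where $\|\mathbf x(t)\|_2\le\sqrt3<R$.
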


\section{Proof of \Cref{thm:pruning-lower-bound}}
\label{sec:main-proof}
In this section, we prove \Cref{thm:pruning-lower-bound}. Because of space constraints, some proofs of intermediate results are omitted and given in \Cref{apx:omitted-proofs}.

\subsection{Union bound over all pruned subnetworks}
\label{sec:union-bound}
For any $k \in \{1,\dots,N_h\}$, pruning $g$ down to $k$ hidden neurons
produces $\binom{N_h}{k}$ distinct subnetworks.
Let $E_k$ be the event that there is a $k$-neuron
pruned subnetwork that $\varepsilon$-approximates the target, and let
$p_k := \Pr(\text{a fixed $k$-neuron network $\varepsilon$-approximates the target})$.
Then, by a union bound,
\begin{equation}
\textstyle
\Pr\Bigl(\bigcup_{k=1}^{N_h} E_k\Bigr)
\;\le\;
\sum_{k=1}^{N_h} \binom{N_h}{k}\, p_k .
\label{eq:union-bound}
\end{equation}
The rest of the proof is devoted to quantifying the probability $p_k$.

\subsection{Restriction over simple input families}
\label{sec:restriction-input}
We introduce some simple families of inputs that will be used to derive necessary
conditions on the function $g$ for neuron pruning to succeed.
For each $i \in \{1, \dots, \lfloor d/2 \rfloor\}$, define the input family $\mathbf{x}_i(t) := t \mathbf{e}_{2i-1} + \mathbf{e}_{2i}$ for $t \in \mathbb{R}$, where $\mathbf{e}_j$ denotes the $j$-th standard basis vector. This family has exactly two nonzero coordinates: a variable entry $t$ at index $2i-1$ and a fixed entry $1$ at index $2i$.
There are $\lfloor d/2 \rfloor$ such families, supported on disjoint pairs of
coordinates.

Recall that $\mathbf{w^\star} = (w_1^\star,\dots,w_d^\star) \in \mathbb{R}^d$ denotes the target weight vector.
We choose $\mathbf{w^\star} = (1,\ldots,1)^\top$.
Along the input path $\mathbf{x}_i(t)$, the output of the target neuron is
\[
    f(\mathbf{x}_i(t))
    =
    \sigma(\langle \mathbf{w^\star}, \mathbf{x}_i(t)\rangle)
    =
    \sigma\bigl(w_{2i-1}^\star\, t + w_{2i}^\star\bigr),
\]
which defines a one-dimensional piecewise-linear function of $t$ with a single
\emph{breakpoint}
$
    t_i^\star
    :=
    -\nicefrac{w_{2i}^\star}{w_{2i-1}^\star}
    = -1.
$
On one side of $t_i^\star$ the function is affine with slope $w_{2i-1}^\star=1$, and on the other side it is identically zero.
Similarly, for the randomly initialized network $g(x)$, for each hidden neuron
$j$ with an incoming weight vector $\mathbf{w_j}=(w_{j,1},\dots,w_{j,d})$, and for each input family
$\mathbf{x}_i(t)$, we have
\[
    \sigma(\langle \mathbf{w_j}, \mathbf{x}_i(t)\rangle)
    =
    \sigma\bigl(w_{j,\,2i-1}\, t + w_{j,\,2i}\bigr).
\]
Each hidden neuron $j$ therefore introduces its own breakpoint
$
    t_{i,j}
    \;:=\;
    -\nicefrac{w_{j,\,2i}}{w_{j,\,2i-1}}
$
for the input family $\mathbf{x}_i(t)$.
Consequently, given a subset $S \subseteq \{1,\dots,N_h\}$ of hidden neurons, the restriction $g_S(\mathbf{x}_i(t))$ of the pruned network $g_S$ on the input family $\mathbf{x}_i(t)$ yields a piecewise-linear function
of $t$, whose breakpoints are induced by the neurons in $S$.

\subsection{Breakpoints and necessary conditions for approximation}
\label{sec:breakpoints-original}
To analyze the probability $p_k$, it is convenient to view neuron pruning as a
sequential selection process in which the $k$ neurons that are kept in the final subnetwork are chosen one at a time.

Fix an index $i \in \{1,\dots,\lfloor d/2 \rfloor\}$ and consider the restriction of the network
to the input family $\mathbf{x}_i(t)$ introduced above, in which each hidden neuron induces a single breakpoint at a location
determined by its weights.
As neurons are selected sequentially, each newly added neuron can have one of the
following effects on the breakpoint structure of the current partial sum of chosen
neurons for the input family $\mathbf{x}_i(t)$: 
(i) it may introduce a new breakpoint,
(ii) it may cancel an existing breakpoint through interaction with previously
selected neurons, or
(iii) it may leave the breakpoint structure unchanged.
After all $k$ neurons have been selected, the restriction of the resulting
pruned network to $\mathbf{x}_i(t)$ is a piecewise-linear function whose breakpoints are
entirely determined by the selected neurons.

Since we are interested in inputs $x$ satisfying $\|x\|_2 \le R$, for an input
family $\mathbf{x}_i(t)$ this restriction implies $|t| \le \sqrt{R^2 - 1}$.
Consequently, only breakpoints lying in the interval $I_R = [-\sqrt{R^2 - 1},\, \sqrt{R^2 - 1}]$ can affect the approximation error on bounded inputs.
Breakpoints outside this interval have no effect and will therefore be ignored and not counted.

We partition $I_R$ into $\nicefrac R\varepsilon$ subintervals (bins) $I_{B_1},...,I_{B_{\nicefrac R\varepsilon}}$ of length $\varepsilon$, and denote this
partition by $\mathcal{P}_\varepsilon$.
Given a subset $S \subseteq \{1,\dots,N_h\}$ of hidden neurons, we say that a generic bin $I_{B} \in \mathcal{P}_\varepsilon$ is \emph{broken} for $g_S$ and input family $\mathbf{x}_i(t)$ if $g_S(\mathbf{x}_i(t))$ is non-linear in $I_B$ (see \Cref{def:broken-bin} for a formal statement).
Bins that contain only canceled breakpoints, or no breakpoints at all, are considered
\emph{unbroken}.

\begin{definition}[Broken bin]
    \label{def:broken-bin}
    Let $I_B \subset \mathbb{R}$ be an interval (bin) of length $\varepsilon$, and let
    $h : I_B \to \mathbb{R}$ be a function.
    We say that $I_B$ is \emph{broken} for $h$ if there exist three points
    $t_1 < t_2 < t_3$ in $I_B$ such that for every affine function $\ell(t)=at+b$,
    $
        \max_{j=1,2,3} |h(t_j)-\ell(t_j)|
        \;\ge\;
        c\,\varepsilon
    $
    for some universal constant $c>0$.
\end{definition}
By \Cref{lem:broken-bin-prevents-approx} and \Cref{lem:breakpoint-necessary}, $\varepsilon$-approximation along the path $\mathbf{x}_i(t)$ requires the breakpoint structure of the pruned network to be aligned with that of the target: 
after selecting $k$ neurons, the only possible broken bin is the one containing the target breakpoint $t_i^\star$, and no additional broken bins may occur.
\begin{restatable}[Broken bin prevents approximation]{lemma}{brokenbinpreventsapprox}
    \label{lem:broken-bin-prevents-approx}
    Let
    $
        f(t) := \sigma(a t + b),
        $ with $a \neq 0,
    $
    and let $I_B \subset \mathbb{R}$ be an interval of length $\varepsilon$ such that
    the breakpoint
    $t^\star := -\nicefrac{b}{a}$
    satisfies $t^\star \notin I_B$.
    Let $h : \mathbb{R} \to \mathbb{R}$ be any function.
    If $I_B$ is broken for $h$ in the sense of \Cref{def:broken-bin}, then
    $
        \sup_{t \in B} |f(t) - h(t)|
        \;\ge\;
        c\,\varepsilon,
    $
    for a universal constant $c>0$.
\end{restatable}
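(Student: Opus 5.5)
The idea is that $f$ is affine on the whole bin. Since $t^\star \notin I_B$ and $I_B$ is an interval, the bin lies entirely on one side of $t^\star$. On that side, $f(t) = \sigma(at+b)$ equals either the affine function $at+b$ or the zero function. So there is an affine $\ell_f(t) = a' t + b'$ (with $(a',b') = (a,b)$ or $(0,0)$) such that $f = \ell_f$ on all of $I_B$.

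Now suppose $I_B$ is broken for $h$ in the sense of \Cref{def:broken-bin}. Then there are three points $t_1 < t_2 < t_3$ in $I_B$ with the following property: for every affine $\ell$,
\[
\max_{j=1,2,3} |h(t_j) - \ell(t_j)| \ge c\,\varepsilon .
\]
Apply this with the particular choice $\ell = \ell_f$. It gives an index $j$ with $|h(t_j) - \ell_f(t_j)| \ge c\,\varepsilon$. Because $t_j \in I_B$, we have $\ell_f(t_j) = f(t_j)$, and hence
\[
\sup_{t \in I_B} |f(t) - h(t)| \;\ge\; |f(t_j) - h(t_j)| \;\ge\; c\,\varepsilon ,
\]
with the same universal constant $c$ as in \Cref{def:broken-bin}.

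No step here is a real obstacle. The only point needing care is the case split on which side of $t^\star$ the bin lies, together with the sign of $a$, to confirm that $f$ is affine on $I_B$. This holds because $t^\star$ is the unique point where $at+b$ changes sign, and $I_B$ is connected and does not contain it. If $I_B$ is closed and $t^\star$ is merely an endpoint, the hypothesis $t^\star \notin I_B$ excludes this case. Even so, the argument would still go through, since $\sigma(at+b)$ is affine on a closed half-line ending at $t^\star$.

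Finally, the statement writes $\sup_{t\in B}$; I read $B$ as the bin $I_B$, which the argument above covers. The function $h$ need not have any regularity, because the argument only evaluates $h$ at the three witness points.
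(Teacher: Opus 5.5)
Your proof is correct and follows the paper's argument: because $t^\star\notin I_B$, $f$ agrees with an affine function on $I_B$, and applying \Cref{def:broken-bin} with that affine function gives a witness point where $|f-h|\ge c\,\varepsilon$. You use the explicit affine extension $\ell_f$ (where the paper writes ``taking $\ell=f$'') and keep the quantifier order of \Cref{def:broken-bin}, which makes this step slightly more precise than the paper's one-line version.
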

\begin{restatable}[A breakpoint is necessary for approximation]{lemma}{breakpointnecessary}
    \label{lem:breakpoint-necessary}
    Let
    $
        f(t) := \sigma(a t + b),
        $ $a \neq 0,$
    and let
    $t^\star := -\nicefrac{b}{a}$
    be its breakpoint.
    Let $h : \mathbb{R} \to \mathbb{R}$ be a function that is linear on an interval
    $I$ containing $[t^\star-\delta,\, t^\star+\delta]$ for some $\delta>0$.
    Then,
    $
        \sup_{t \in [t^\star-\delta,\, t^\star+\delta]}
        |f(t) - h(t)|
        \;\ge\;
        |a|\,\delta /4.
    $
\end{restatable}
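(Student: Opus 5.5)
The plan is to reduce to a one-variable comparison between the affine function $h$ and the ReLU $f(t)=\sigma(at+b)$ on the symmetric interval $J:=[t^\star-\delta,\,t^\star+\delta]$. Write $h(t)=\alpha t+\beta$ on $J$; this is legitimate because $h$ is linear on $I\supseteq J$. Put $M:=\sup_{t\in J}|f(t)-h(t)|$. The idea is that $f$ has a kink of size $|a|$ at the midpoint of $J$, and no affine function can follow it. We will extract this through a second-difference identity at the three points $t^\star-\delta,\ t^\star,\ t^\star+\delta$.

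First evaluate $f$ at these points. Since $f(t^\star)=\sigma(0)=0$ and $f(t^\star\pm\delta)=\sigma(\pm a\delta)$, exactly one of the two endpoint values equals $|a|\delta$ and the other is $0$, whatever the sign of $a$. Hence
\[
f(t^\star-\delta)+f(t^\star+\delta)-2f(t^\star)=|a|\,\delta .
\]
For the affine $h$, the same second difference vanishes: $h(t^\star-\delta)+h(t^\star+\delta)-2h(t^\star)=0$.

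Subtracting the two identities, the function $e:=f-h$ satisfies $e(t^\star-\delta)+e(t^\star+\delta)-2e(t^\star)=|a|\delta$. By the triangle inequality, the left side is at most $|e(t^\star-\delta)|+|e(t^\star+\delta)|+2|e(t^\star)|\le 4M$. Therefore $M\ge |a|\delta/4$, which is exactly the claimed constant.

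No step is genuinely hard. The only care needed is the sign bookkeeping for $a$: the second difference of $\sigma(at+b)$ equals $|a|\delta$ whether $a>0$ or $a<0$. We also use that $J$ lies inside the interval where $h$ is affine, so that the second difference of $h$ vanishes.
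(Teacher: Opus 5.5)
Your proof is correct and is essentially the paper's argument: the paper also evaluates at $t^\star-\delta,\ t^\star,\ t^\star+\delta$ and uses $h(t^\star)=\tfrac12\bigl(h(t^\star-\delta)+h(t^\star+\delta)\bigr)$ to get $\Delta_0=\tfrac12(\Delta_-+\Delta_++|a|\delta)$ for the errors $\Delta=h-f$, which is your second-difference identity. The only cosmetic difference is the conclusion: the paper argues by contrapositive (if both endpoint errors are below $|a|\delta/4$, the middle one must exceed it), whereas you bound the second difference directly by $4M$ via the triangle inequality, which is slightly cleaner.
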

\noindent It is convenient to rephrase the necessary $\varepsilon$-approximation conditions derived from \Cref{lem:broken-bin-prevents-approx} and \Cref{lem:breakpoint-necessary} in an alternative form that simplifies
the following analysis. Rather than viewing the goal as approximating the target ReLU
itself, we conceptually start from a mirrored version of the target along
$\mathbf{x}_i(t)$, obtained by flipping its output sign while preserving the same
breakpoint location $t_i^\star$. This mirrored function induces exactly one broken
bin, namely the bin containing $t_i^\star$. A neuron whose breakpoint lies in this
same bin and whose slope jump is suitably close to the target's slope will cancel the mirrored breakpoint,
so that their sum is linear on the bin and hence the bin becomes unbroken.

From this viewpoint, successful $\varepsilon$-approximation on the input family $\mathbf{x}_i(t)$ implies
starting from a configuration with a single broken bin (in which the target breakpoint lies) and requiring that, after
selecting $k$ neurons, no broken bins remain. We adopt this reformulation
throughout the remainder of the proof: the bin containing $t_i^\star$ is treated
as initially broken, and we consider approximation to be successful only if after selecting $k$ neurons we have no broken bins.

Using this reformulation, we now describe the sequential selection of $k$ neurons as a stochastic process that tracks the number of broken bins.
As we will introduce other stochastic processes later in the proof, we will refer to this one as the \emph{original pruning process}.
For each step $s = \{1,\dots,k\}$, a new neuron is added and it may increase by one, decrease by one, or leave unchanged the number of broken bins in $\mathcal{P}_\varepsilon$, depending of where its breakpoint falls. 
Since we treat the bin containing $t_i^\star$ as initially broken, the starting number of broken bins at step $s=0$ will be one.
\begin{definition}
    We define $B^{\mathrm{orig}}_s$ as the number of broken bins after having selected $s$ neurons in the original pruning process, with $B^{\mathrm{orig}}_0=1$. 
\end{definition}
As noted above, for a successful $\varepsilon$-approximation on the input family $\mathbf{x}_i(t)$, we want to have $B_k^{\mathrm{orig}}=0$.
Now consider all $\lfloor d/2 \rfloor$ disjoint input families $\mathbf{x}_i(t)$. 
For the overall approximation to be successful, we need to have $B_k^{\mathrm{orig}}=0$ for all such families.
By construction, the breakpoints $t_{i,j}$ are independent across
all $\lfloor d/2 \rfloor$ families $\mathbf{x}_i(t)$, so 
$ p_k \le
    \prod_{i=1}^{\lfloor d/2 \rfloor}
    \Pr(B_k^{\mathrm{orig}} = 0)
$, which gives the following.
\begin{lemma}
\label{fact:orig-success-prob}
    It holds $p_k \le (\Pr(B_k^{\mathrm{orig}} = 0))^{\lfloor d/2 \rfloor}$.
\end{lemma}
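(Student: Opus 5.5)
The argument has two parts. First, successful approximation implies a simultaneous event across all $\lfloor d/2\rfloor$ input families. Second, that event factorizes.

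Fix a $k$-neuron network with weights $\mathbf w_1,\dots,\mathbf w_k$ drawn i.i.d.\ from $\mathcal N(0,I_d)$, and suppose it $\varepsilon$-approximates the target on the ball of radius $R$ (up to the constant $C$ of \Cref{thm:pruning-lower-bound}). For each family $\mathbf x_i(t)$, the restriction $t\mapsto g_S(\mathbf x_i(t))$ must approximate $\sigma(t+1)$ for all $t\in I_R$, since $\|\mathbf x_i(t)\|_2\le R$ there. We then apply \Cref{lem:broken-bin-prevents-approx} and \Cref{lem:breakpoint-necessary}, choosing the universal constants so that $C\varepsilon$ is below both lower bounds they provide. \Cref{lem:broken-bin-prevents-approx} rules out any broken bin other than the one containing $t_i^\star=-1$. \Cref{lem:breakpoint-necessary} rules out the restriction being linear on a neighborhood of $t_i^\star$ of width of order $\varepsilon$; hence the bin of $t_i^\star$ must contain a non-canceled breakpoint whose slope jump matches the target's. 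In the mirrored reformulation adopted above, this is exactly the event $\{B_k^{\mathrm{orig}}=0\}$ for family $i$. Writing $A_i$ for this event, we get the inclusion
\[
\{\text{the network } \varepsilon\text{-approximates the target}\}\subseteq \bigcap_{i=1}^{\lfloor d/2\rfloor} A_i .
\]

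For the factorization, observe that $A_i$ is a measurable function only of the pairs $(w_{j,2i-1},w_{j,2i})_{j=1}^k$. These are the coordinates that determine the breakpoints $t_{i,j}=-w_{j,2i}/w_{j,2i-1}$ and the slope jumps $w_{j,2i-1}$, and hence all of the breakpoint dynamics of $B^{\mathrm{orig}}$ for family $i$. Because each $\mathbf w_j$ has independent standard Gaussian coordinates and the coordinate pairs $\{2i-1,2i\}$ are disjoint across $i$, the $\sigma$-algebras generated by these blocks are independent. Hence
\[
p_k\le\Pr\Bigl(\bigcap_i A_i\Bigr)=\prod_i\Pr(A_i).
\]
Each block has the same law (i.i.d.\ standard Gaussian pairs), and the target is identical on every family since $\mathbf w^\star=(1,\dots,1)$. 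So every $\Pr(A_i)$ equals $\Pr(B_k^{\mathrm{orig}}=0)$, which gives $p_k\le(\Pr(B_k^{\mathrm{orig}}=0))^{\lfloor d/2\rfloor}$.

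The main point of care is the first step: checking that the necessary condition for each family depends only on that family's coordinate pair, and not, for instance, on the sequential ordering used to define $B^{\mathrm{orig}}$. I would handle this by defining $A_i$ intrinsically, as the final breakpoint configuration of the sum restricted to $\mathbf x_i(t)$, which does not depend on the order in which neurons are added. Then $B_k^{\mathrm{orig}}$ is just a deterministic function of that configuration, and the process description is only a bookkeeping device for it.
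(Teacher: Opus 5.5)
Your proof is correct and follows the paper's argument. As in the paper, success forces $B_k^{\mathrm{orig}}=0$ on every family, and the per-family events depend on disjoint, independent, identically distributed coordinate pairs $(w_{j,2i-1},w_{j,2i})_{j}$, so the probability factorizes into $\lfloor d/2\rfloor$ equal factors.

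One small tightening: \Cref{lem:breakpoint-necessary} only gives a breakpoint within $O(\varepsilon)$ of $t_i^\star$, not a matching slope jump. The inclusion you need follows directly anyway, since $\sup|g_S-f|\le C\varepsilon<c\varepsilon$ lets $\ell\equiv 0$ witness that no bin is broken for the mirrored function $g_S(\mathbf x_i(\cdot))-f(\mathbf x_i(\cdot))$.
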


\subsection{Construction of a dominating capped process}
\label{sec:capped-process}
In this section, we introduce a simplified process that
stochastically dominates the original pruning process described in \Cref{sec:breakpoints-original}, in the sense that it produces fewer broken
bins.
Analyzing this simplified process will yield an upper bound on the probability that the
original pruning procedure succeeds.
As in the previous section, fix an index $i \in \{2,\dots,d\}$ and consider the restriction of the network
to the input family $\mathbf{x}_i(t)$.

We define a \emph{capped process}, analogous to the original pruning  process defined in \Cref{sec:breakpoints-original}, except for the fact that the number of broken bins is constrained not to exceed a fixed threshold. Fix an integer $T \in \{1,\dots,k\}$.
The number $B_s^{\mathrm{cap}}$ of broken bins in the capped process after selecting $s$ neurons, for $s \in \{1,\dots,k\}$, is updated according to these rules:
\begin{itemize}[noitemsep,nolistsep,leftmargin=2em]
    \item If $B_s^{\mathrm{cap}} < T$, as for the original process,  $B_s^{\mathrm{cap}}$ may increase by one if the newly selected neuron
          introduces a breakpoint that breaks a previously unbroken bin; it may decrease
          by one if the neuron introduces a breakpoint that cancels an existing one; otherwise,
          $B_s^{\mathrm{cap}}$ remains unchanged.
    \item If $B_s^{\mathrm{cap}} = T$, then even if breakpoint falls into a previously unbroken bin, we do not increase $B_s^{\mathrm{cap}}$ further; $B_s^{\mathrm{cap}}$ may still decrease if cancellations happen.
\end{itemize}

By construction, after having selected $s$ shared neurons, if we look at the capped and original processes we have 
\begin{equation}
    \label{eq:capped-process-dominates}
    B_s^{\mathrm{cap}} \leq B_s^{\mathrm{orig}}, \qquad \forall s \in \{1,\dots,k\}
\end{equation}
The capped process is more favorable to maintaining a small number of broken bins.
Consequently, the probability that the original pruning process reaches 0 broken bins after $k$ steps is upper bounded by the corresponding probability for the
capped process. In particular, from \cref{eq:capped-process-dominates}, we have the following upper bound on the probability of a successful $\varepsilon$-approximation for a random network with $k$ hidden neurons.
\begin{lemma}
\label{fact:cap-succ-prob}
    It holds $\Pr(B_k^{\mathrm{orig}} = 0) \le \Pr(B_k^{\mathrm{cap}} = 0)$.
\end{lemma}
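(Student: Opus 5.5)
The plan is to derive \Cref{fact:cap-succ-prob} from the pathwise inequality \eqref{eq:capped-process-dominates}. To do this, I will put the original and capped processes on one probability space. Both processes are driven by the same sequence of selected neurons, and hence by the same sequence of breakpoints $t_{i,1},\dots,t_{i,k}$ together with their slope jumps. The coupling is therefore simply the identity: both processes are deterministic functionals of the same random weights $\{\mathbf w_j\}_{j\in S}$. Once \eqref{eq:capped-process-dominates} holds for every realization, the conclusion is immediate. Since $B_k^{\mathrm{cap}}\ge 0$, the inclusion $\{B_k^{\mathrm{orig}}=0\}\subseteq\{B_k^{\mathrm{cap}}=0\}$ holds pointwise, and monotonicity of probability gives $\Pr(B_k^{\mathrm{orig}}=0)\le\Pr(B_k^{\mathrm{cap}}=0)$.

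The substance is verifying \eqref{eq:capped-process-dominates} pointwise. I will prove it by induction on $s$. The base case is $B_0^{\mathrm{cap}}=B_0^{\mathrm{orig}}=1\le T$. For the inductive step I need the capped process to be defined in a monotone way. Concretely, I will fix the convention that the capped process tracks a \emph{subset} $\mathcal B_s^{\mathrm{cap}}$ of the truly broken bins $\mathcal B_s^{\mathrm{orig}}$, with $B_s^{\mathrm{cap}}=|\mathcal B_s^{\mathrm{cap}}|$. When a new breakpoint lands in a bin, there are three cases. If the bin was unbroken and becomes broken, the original set gains it, while the capped set gains it only if it has fewer than $T$ elements. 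If the bin was broken and becomes unbroken through cancellation, both sets remove it (or the capped set already lacked it). Otherwise nothing changes. In each case the invariant $\mathcal B_s^{\mathrm{cap}}\subseteq\mathcal B_s^{\mathrm{orig}}$ is preserved, so $B_s^{\mathrm{cap}}\le B_s^{\mathrm{orig}}$ follows.

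The main obstacle is exactly this bookkeeping. If the capped process is read as a bare counter, a cancellation event in the original process could decrement the capped counter even when the cancelled bin had never been counted there. That would seem to break monotonicity: if capped equals original minus $m$ and both decrease, the gap is still fine, but a decrement of the capped count when it is already small could in principle be spurious. Tracking sets instead of counts resolves this, because a decrement in the capped process happens only for a bin it actually holds. With that convention the induction is routine, and the lemma follows from the event inclusion above.
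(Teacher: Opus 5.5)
Your proof is correct and follows the paper's route. The paper also couples the two processes through the same selected (``shared'') neurons, reads $\Pr(B_k^{\mathrm{orig}}=0)\le\Pr(B_k^{\mathrm{cap}}=0)$ off the pathwise bound \eqref{eq:capped-process-dominates}, and asserts that bound ``by construction''; your invariant $\mathcal B_s^{\mathrm{cap}}\subseteq\mathcal B_s^{\mathrm{orig}}$ just makes that step explicit.

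Two side remarks. First, the obstacle you worry about does not arise for this inequality: spurious decrements only lower $B_s^{\mathrm{cap}}$, so a bare counter that cannot go below $0$ already works. Second, the cleanest formalization is $B_s^{\mathrm{cap}}:=\min(B_s^{\mathrm{orig}},T)$. It gives the event equality $\{B_k^{\mathrm{orig}}=0\}=\{B_k^{\mathrm{cap}}=0\}$, and, unlike your subset convention, it keeps the capped state equal to the true number of broken bins whenever it is below $T$. That property is what the subsequent choice $p=p_0$ relies on.
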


\subsection{Construction of a dominating birth-death process}
\label{sec:birth-death}
In the capped process described in \Cref{sec:capped-process}, the probability that a selected neuron creates a new broken bin depends on the current number and locations of broken bins.
For example, this probability is high when few bins are broken and lower when most are already broken.
This dependence makes the exact process difficult to analyze.
To simplify the analysis, we introduce a
homogeneous birth-death process that we will use to upper bound $\Pr(B_k^{\mathrm{cap}} = 0)$.
\begin{definition}
    A $(q,p,T)$-chain ${(B_s^{\mathrm{bd}})}_{s \geq 0}$ is a birth-and-death process on a state space \\$\{0,1,\dots,T\}$ such that $B_0^{\mathrm{bd}}=1$ and
    \begin{itemize}[noitemsep,nolistsep,leftmargin=2em]
        \item from any state $b<T$, it transitions from state $b$ to $b+1$ with probability $p$;
        \item from any state $b>0$, it transitions from state $b$ to $b-1$ with probability $q$;
        \item otherwise, it remains at $b$.
    \end{itemize}
\end{definition}
Similarly to what we did in section \Cref{sec:capped-process}, where we introduced a capped process that keeps a lower number of broken bins than the original pruning process, here we want to choose a triple $(q,p,T)$ such that the associated $(q,p,T)$-chain ${(B_s^{\mathrm{bd}})}_{s \geq 0}$ stochastically dominates the capped process, in the sense that, for a suitable coupling, we have
\begin{equation}
\label{eq:birth-death-dominates}
B_s^{\mathrm{bd}} \leq B_s^{\mathrm{cap}}, \qquad \forall s \in \{1,\dots,k\}
\end{equation}
To achieve this, we can choose $p$ as a lower bound on the probability of introducing a new broken bin in the capped process, and $q$ as an upper bound on the probability
of removing a broken bin, both uniformly over all steps $s \in \{1,\dots,k\}$.
Such a choice ensures that \cref{eq:birth-death-dominates} holds and we get the following.
\begin{lemma}
\label{fact:bd-succ-prob}
    It holds $ \Pr(B_k^{\mathrm{cap}} = 0) \le \Pr(B_k^{\mathrm{bd}} = 0)$.
\end{lemma}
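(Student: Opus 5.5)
The plan is to build an explicit monotone coupling of the capped process $(B_s^{\mathrm{cap}})$ and the $(q,p,T)$-chain $(B_s^{\mathrm{bd}})$ on a common probability space. Under this coupling, \cref{eq:birth-death-dominates} should hold pathwise for every $s\in\{0,\dots,k\}$. The lemma then follows at once, because $\{B_k^{\mathrm{cap}}=0\}\subseteq\{B_k^{\mathrm{bd}}=0\}$ when $B_k^{\mathrm{bd}}\le B_k^{\mathrm{cap}}\ge 0$.

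The coupling uses one auxiliary uniform variable $U_s\sim\mathrm{Unif}[0,1]$ per step, independent of everything else. Let $\mathcal F_{s-1}$ be the history up to step $s-1$. Given $\mathcal F_{s-1}$, let $p_s^{+}$ and $p_s^{-}$ be the conditional probabilities that the $s$-th selected neuron increases, respectively decreases, the capped count. By the choice of $(q,p,T)$ we have, uniformly in $s$:
\begin{itemize}
\item $p_s^{+}\ge p$ whenever $B_{s-1}^{\mathrm{cap}}<T$;
\item $p_s^{-}\le q$ whenever $B_{s-1}^{\mathrm{cap}}>0$.
\end{itemize}
Realize the capped step as follows: decrease if $U_s\le p_s^{-}$, increase if $U_s\ge 1-p_s^{+}$, and stay otherwise. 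The actual neuron weights can be generated conditionally on which of these three outcomes occurs, so the capped process keeps its law. Realize the chain step in the same way with thresholds $q$ and $1-p$, applying them only when the move is allowed, i.e.\ decreasing only from $b>0$ and increasing only from $b<T$. This requires $p+q\le 1$, which I will take as part of the admissible choice of $(q,p,T)$.

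The induction starts from $B_0^{\mathrm{bd}}=B_0^{\mathrm{cap}}=1$. Assuming $B_{s-1}^{\mathrm{bd}}\le B_{s-1}^{\mathrm{cap}}$, there are two cases.

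\emph{Strict inequality.} Both processes move by at most one per step, so the gap can only close to equality, never reverse.

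\emph{Equality,} say both equal $b$. The threshold ordering is designed for this case.
\begin{itemize}
\item If the chain decreases, then $b>0$ and $U_s\le q$. Either the capped process also decreases, or at worst it stays or increases, and the ordering is kept.
\item If the capped process decreases while the chain does not, then $U_s\le p_s^{-}\le q$, so the chain would decrease too. Hence this cannot happen.
\item If the chain increases, then $b<T$ and $U_s\ge 1-p\ge 1-p_s^{+}$, so the capped process also increases.
\item If $b=T$, the chain cannot increase at all.
\end{itemize}
In every case the inequality $B_s^{\mathrm{bd}}\le B_s^{\mathrm{cap}}$ is preserved.

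The main obstacle is not this coupling but making the hypotheses precise. The bounds $p_s^{+}\ge p$ and $p_s^{-}\le q$ must hold conditionally on the full past and not merely on average. I expect that to come from the independence of the $\mathbf w_j$: given the history, the new breakpoint $t_{i,j}$ has a fixed density, and breaking a fresh bin or cancelling an existing one depends only on which of at most $R/\varepsilon$ bins it lands in. Any state-dependence beyond the current count is absorbed by taking the bounds uniformly over configurations.
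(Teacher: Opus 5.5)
Your approach is the paper's. The paper only asserts that taking $p$ as a uniform lower bound on the probability of creating a broken bin, and $q$ as a uniform upper bound on the probability of removing one, yields a coupling with $B_s^{\mathrm{bd}}\le B_s^{\mathrm{cap}}$ for all $s$. Your shared-uniform construction is a correct explicit version of that coupling: you draw the outcome first and then the neuron weights conditionally on it. You also make explicit two things the paper leaves implicit: the bounds must hold conditionally on the whole configuration, and $p+q\le 1$ is needed.

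One step is misjustified. In the strict-inequality case you argue that each process moves by at most one per step, so the gap can only close to equality. That is false. If $B_{s-1}^{\mathrm{cap}}=B_{s-1}^{\mathrm{bd}}+1$, a chain birth together with a capped death in the same step gives $B_s^{\mathrm{bd}}=B_s^{\mathrm{cap}}+1$.

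What rules this out is your coupling, not the step size. A chain birth requires $U_s\ge 1-p$, while a capped death requires $U_s\le p_s^{-}\le q\le 1-p$. Both can therefore occur only on the null event $\{U_s=q=1-p\}$. This is where $p+q\le 1$ is actually used for monotonicity, not just for the chain to be well defined. With that sentence replaced, your induction goes through and the proposal proves the lemma.
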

As far as the choice of the maximum number $T$ of broken bins goes, what we want is to keep it somehow smaller than the total
number of bins, in such a way that in the capped process the probability of introducing a broken bin is always more likely than removing one.
Let us fix, for a sufficiently large constant $c>0$,
\begin{equation}
\label{eq:cap-choice}
    \textstyle
    T = \frac{R}{c\varepsilon}.
\end{equation}

We now make more precise the choice of the probability $p$. Since the total number of bins is $\Theta(R/\varepsilon)$, as long as $b<T$, the fraction of broken bins is at most $1/c$, and the fraction of unbroken bins is at least $1-1/c$. By \Cref{lem:single-breakpoint-breaks-bin}, there exists a constant $\gamma>0$ such that a bin becomes broken in the sense of \Cref{def:broken-bin} whenever it contains a single well-placed breakpoint whose slope jump satisfies $|\Delta|\ge\gamma$. Here, “well-placed” means that the breakpoint lies away from the bin boundaries, as specified in \Cref{lem:single-breakpoint-breaks-bin}. 

For a given family, the breakpoint location of a neuron is a ratio of two independent standard Gaussians, while its slope jump is given by one of the Gaussian weights. As a result, the breakpoint location and slope jump have a joint density that is continuous and positive on the relevant range. Since a constant fraction of the bins are unbroken and a constant fraction of each such bin consists of well-placed locations, there exists a constant $p_0>0$ such that, uniformly over all steps, a newly sampled neuron lands a breakpoint in a suitable location of an unbroken bin and has slope jump at least $\gamma$ with probability at least $p_0$, which by \Cref{lem:single-breakpoint-breaks-bin} causes that bin to become broken. Therefore, we may set $p=p_0$.

\begin{restatable}[A single sufficiently steep neuron breaks a bin]{lemma}{singlebreakpointbreaksbin}
    \label{lem:single-breakpoint-breaks-bin}
    There exist universal constants $\gamma,c>0$ such that the following holds.
    Let $B=[u,u+\varepsilon]$ be an interval, and let $h:B\to\mathbb{R}$ be a
    piecewise-linear function with exactly one breakpoint $t_0\in [u+\eps/4,u+3\eps/4]$.
    Assume that the left and right slopes at $t_0$ satisfy
    $
        |m_+ - m_-| \ge \gamma .
    $
    Then, $B$ is broken for $h$ in the sense of \Cref{def:broken-bin}.
\end{restatable}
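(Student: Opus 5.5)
Write $h(t)=\ell_-(t)$ for $t\le t_0$ and $h(t)=\ell_-(t)+\Delta(t-t_0)$ for $t\ge t_0$, where $\ell_-$ is affine with slope $m_-$ and $\Delta:=m_+-m_-$. Subtracting an affine function does not change whether the bin is broken, because $\ell$ ranges over all affine functions. So it suffices to treat $h(t)=\Delta\,\sigma(t-t_0)$ with $|\Delta|\ge\gamma$.

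\textbf{Choice of points.} I will take three symmetric points around the kink,
\[
t_1=t_0-\eps/4,\qquad t_2=t_0,\qquad t_3=t_0+\eps/4 .
\]
All three lie in $B$, since $t_0\in[u+\eps/4,u+3\eps/4]$. The corresponding values are $h(t_1)=h(t_2)=0$ and $h(t_3)=\Delta\eps/4$.

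\textbf{Second-difference bound.} The key quantity is the second difference
\[
D(\phi)=\phi(t_1)-2\phi(t_2)+\phi(t_3).
\]
It vanishes for every affine $\ell$, because the points are equally spaced. For $h$ it equals $D(h)=\Delta\eps/4$. Set $e_j=h(t_j)-\ell(t_j)$. Then
\[
|\Delta|\eps/4=|D(h)-D(\ell)|=|e_1-2e_2+e_3|\le 4\max_j|e_j|,
\]
so $\max_j |h(t_j)-\ell(t_j)|\ge |\Delta|\eps/16\ge \gamma\eps/16$ for every affine $\ell$.

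\textbf{Conclusion and constants.} Taking $\gamma=1$ (or any fixed value) and $c=\gamma/16$ gives exactly the condition of \Cref{def:broken-bin}. If the constant $c$ in that definition is fixed beforehand, I instead choose $\gamma:=16c$. The assumption $t_0\in[u+\eps/4,u+3\eps/4]$ is needed precisely so that both flanking points at distance $\eps/4$ remain inside the bin.

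\textbf{Main obstacle.} The only delicate point is the order of quantifiers: making $c$ and $\gamma$ consistent with the universal constant $c$ already used in \Cref{def:broken-bin} and \Cref{lem:broken-bin-prevents-approx}. This is handled by setting $\gamma$ proportional to that $c$. The rest is a direct computation.
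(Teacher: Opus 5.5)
Your proof is correct and is essentially the paper's argument. The paper uses the same three points $t_0-\varepsilon/4,\,t_0,\,t_0+\varepsilon/4$ and bounds $|a-m_-|$ and $|a-m_+|$ by $8\eta$ each; this is exactly your second-difference inequality rescaled by $\varepsilon/4$, and both routes give $c=\gamma/16$ (both also implicitly use continuity of $h$ at $t_0$, which holds here since $h$ is a sum of ReLUs).
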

\noindent On the other hand, the fraction of broken bins is at most $1/c$.
Even assuming that cancellation always succeeds when a breakpoint falls inside a broken bin (i.e., the bin becomes unbroken), choosing $c$ in \cref{eq:cap-choice} sufficiently large ensures the existence of a constant $q_0<\min\{1/3, p_0/2\}$ such that, with probability at most $q_0$, uniformly over all steps, a newly added neuron changes a bin from broken to unbroken. We can then set $q = q_0$.

The resulting birth--death process stochastically dominates the capped process
in favor of successful approximation, and thus provides an upper bound on the probability
that neuron pruning yields a network $g_S$ that $\varepsilon$-approximates the target
ReLU.

We can now finally bound the probability $\Pr(B_k^{\mathrm{bd}} = 0)$ of such a $(p,q,T)$-chain to be in state 0 after $k$ steps, which by \Cref{fact:bd-succ-prob} gives us an upper bound on the probability of a successful $\varepsilon$-approximation.

A crucial part of the proof is now to show the following, which we prove via an elegant use of total probabilities. Because of space constraints, the proof is deferred to the appendix. 
\begin{restatable}{lemma}{singlefamilyub}
    \label{lem:single-family-ub}
    It holds 
    $
        \Pr(B_k^{\mathrm{bd}} =0 ) \;\le\; e^{-\Omega(\min(k,T))}.
    $
\end{restatable}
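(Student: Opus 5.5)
The plan is to analyze the $(q,p,T)$-chain directly. The key facts are $B_0^{\mathrm{bd}}=1$, $p=p_0$, and $q=q_0<\min\{1/3,p_0/2\}$, so the chain has a strict upward drift $p-q\ge p_0/2>0$ away from the boundaries. There are two regimes: for $k\le T$ the cap is never reached, and for $k>T$ the cap can act as a reflecting barrier. I will bound $\Pr(B_k^{\mathrm{bd}}=0)$ by splitting according to whether the chain ever reaches a level $L:=\lfloor \min(k,T)/2\rfloor$ (or any fixed constant fraction of $\min(k,T)$) before time $k$.

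The first step is a hitting estimate from the low end. Condition on the jumps only, that is, on the embedded walk that moves up with probability $p/(p+q)$ and down with probability $q/(p+q)$ when it moves. Since $p/(p+q)\ge 2/3$, gambler's ruin shows that, starting from $1$, the probability of ever reaching $0$ before reaching $L$ is at most $(q/p)\le 1/2$. So this alone is only a constant bound. Two refinements handle this.

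\begin{itemize}
\item Each visit to $0$ is not absorbing, and the chain must actually be at $0$ at the final time $k$.
\item Once the chain reaches height $L$, returning to $0$ requires a net downward excursion of length $L$ against the drift. By gambler's ruin, the probability of ever going from $L$ down to $0$ is at most $(q/p)^{L}\le 2^{-L}$. With the cap $T\ge L$ below, this is still true by a coupling with the uncapped walk on the relevant excursion, since the cap only prevents upward moves, which only helps the descent. To keep the argument clean I would instead compare with the walk reflected at $T$ and note that descending from $L$ to $0$ only uses states below $L\le T$.
\end{itemize}

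The second step handles the event that the chain never reaches $L$ within $k$ steps. By the law of total probability over the number of steps, write $B_s^{\mathrm{bd}}$ for $s\le k$ as a lazy walk with increments $+1$ w.p. $p$, $-1$ w.p. $q$, and $0$ otherwise, reflected or held at $0$. Let $Y_s$ be the free walk with the same increments started at $1$. While below $T$, we have $B_s\ge Y_s$ pathwise, since holding at $0$ only pushes the chain up. Hence $\{B_s<L\ \forall s\le k\}\subseteq\{Y_k< L\}$, and $\mathbb E[Y_k]=1+(p-q)k\ge 1+p_0k/2$. With $L\le p_0 k/4$ (adjusting constants), Hoeffding gives $\Pr(Y_k<L)\le e^{-\Omega(k)}$. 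In the regime $k>T$ I take $L=\Theta(T)$ and apply the same bound on the first $T'=\Theta(T)$ steps only, which gives $e^{-\Omega(T)}$. Combining, $\Pr(B_k=0)\le \Pr(\text{never reach }L)+\Pr(\text{reach }L\text{ then hit }0)\le e^{-\Omega(\min(k,T))}+2^{-L}=e^{-\Omega(\min(k,T))}$.

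The main obstacle is the interaction with the cap in the regime $k\gg T$. Over a very long horizon, one must rule out repeated attempts to descend from near $T$ to $0$: there are up to $k$ attempts, each succeeding with probability $2^{-\Theta(T)}$, so a naive union bound gives $k\,2^{-\Theta(T)}$, which is not $e^{-\Omega(T)}$ if $k$ is exponential in $T$. To fix this I would use the stationary distribution instead. The chain is a finite birth--death chain with detailed balance $\pi(b+1)/\pi(b)=p/q\ge 2$, so $\pi(0)\le 2^{-T}$. From any starting point the law after time $t$ is dominated in the sense $\Pr(B_t=0)\le \pi(0)\cdot\max_b \pi(b)^{-1}\Pr_b(\cdot)$-type bounds. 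More simply, by monotone coupling with the chain started from stationarity, which stochastically dominates the chain started at $1$ only after coalescence, $\Pr(B_k=0)\le \Pr(\text{no coalescence by }k)+\pi(0)$. Coalescence happens once the chain from $1$ reaches $T$, which by the drift argument occurs within $O(T)$ steps except with probability $e^{-\Omega(T)}$. This yields the bound $e^{-\Omega(T)}$ uniformly in $k\ge T$, and it is the step I would check most carefully.
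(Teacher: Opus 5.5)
Your proof is sound in outline but takes a different route from the paper.

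\textbf{The paper's route.} It uses neither hitting times nor the stationary law. Since $B_k^{\mathrm{bd}}=1+R_k-L_k$, ending at $0$ forces $L_k>R_k$. For $k\le 2T/3$ the cap is unreachable, and $\Pr(L_k>R_k)\le\Pr(R_k<pk/2)+\Pr(L_k\ge pk/2)\le e^{-\Omega(k)}$ by Chernoff. For $k>2T/3$ it conditions on the state at time $\hat k=k-2T/3$ and looks only at the last $2T/3$ steps:
\begin{itemize}
\item from a state above $T/3$, reaching $0$ needs more than $T/3$ down-moves in $2T/3$ steps, which has probability $e^{-\Omega(T)}$ because $q<1/3$;
\item from a state at most $T/3$, the cap is out of reach within the window and the Case~1 argument applies.
\end{itemize}
This last-window use of the Markov property disposes of the long-horizon obstacle you identified, uniformly in $k$, using nothing beyond Chernoff bounds.

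\textbf{Your route.} You climb to a level $L=\Theta(\min(k,T))$, pay $(q/p)^{\Theta(L)}$ per descent, and for very long horizons compare with stationarity via a monotone coupling. This is heavier and needs two regimes of $k$ stitched together. In exchange, it identifies the long-run value $\pi(0)$, of order $(q/p)^T$, and so shows that the saturation at $\min(k,T)$ is genuine.

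\textbf{Two statements need repair as written.}

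First, the hitting estimate. For $q>0$ the capped chain is finite and irreducible, so the probability that it \emph{ever} goes from $L$ to $0$ is $1$. Moreover, the cap makes descent easier, so your coupling remark runs the wrong way. What is true is a last-exit bound. After its last visit to $L$ before time $k$, the path stays in $\{0,\dots,L-1\}$, where the cap is invisible. Hence
\[
\Pr\bigl(\text{visit }L,\ B_k^{\mathrm{bd}}=0\bigr)\le k\,(q/p)^{L-1},
\]
which is $e^{-\Omega(\min(k,T))}$ for all $k\le e^{cT}$ with $c$ a small constant.

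Second, the coupling step. Coalescence when the chain from $1$ hits $T$ requires the stationary copy to start at a state $\ge 1$; the complementary event costs another $\pi(0)$. Also, hitting $T$ by time $k$ except with probability $e^{-\Omega(T)}$ requires $k\ge CT$ with $C$ of order $1/(p-q)$. So that step alone is not uniform in $k\ge T$, contrary to what you claim.

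The ranges $k\le e^{cT}$ (last-exit bound) and $k\ge CT$ (coupling) overlap, so together they close the argument.
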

\noindent With \Cref{lem:single-family-ub}, we now have all the ingredients to establish an exponential upper bound for $p_k$.
Combining \Cref{fact:orig-success-prob,fact:cap-succ-prob,fact:bd-succ-prob,lem:single-family-ub} yields the following. 
\begin{lemma}
\label{lem:all-families-ub}
    It holds 
    $
    \label{eq:all-families-ub}
        p_k \le e^{-\Omega(\min(k,T) d)}.
    $
\end{lemma}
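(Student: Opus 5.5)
The plan is to chain the three comparison lemmas and then pass to the product over families. Fix $k$ and one family index $i$. By \Cref{fact:cap-succ-prob} and \Cref{fact:bd-succ-prob}, $\Pr(B_k^{\mathrm{orig}}=0)\le \Pr(B_k^{\mathrm{cap}}=0)\le \Pr(B_k^{\mathrm{bd}}=0)$, where the chain is the $(q_0,p_0,T)$-chain with $T=R/(c\varepsilon)$ from \cref{eq:cap-choice}. By \Cref{lem:single-family-ub}, this is at most $e^{-c'\min(k,T)}$ for a constant $c'>0$ that depends only on $p_0,q_0$. The key point is that $c'$ does not depend on $i$. The marginal law of the pair $(w_{j,2i-1},w_{j,2i})$ is the same for every family, so the chain parameters $p_0,q_0,T$, and hence $c'$, are the same for all $i$.

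Next I would invoke \Cref{fact:orig-success-prob}, which gives $p_k\le (e^{-c'\min(k,T)})^{\lfloor d/2\rfloor}=e^{-c'\lfloor d/2\rfloor\min(k,T)}$. Since $d\ge 2$, we have $\lfloor d/2\rfloor\ge d/3$. So $p_k\le e^{-(c'/3)\min(k,T)\,d}$, which is the claimed $e^{-\Omega(\min(k,T)d)}$.

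The main thing to check is the independence behind \Cref{fact:orig-success-prob}. Success on all families is an intersection of events, and the event for family $i$ is measurable with respect to the coordinates $\{w_{j,2i-1},w_{j,2i}\}_{j\le k}$. These coordinate blocks are disjoint across $i$. Because $\mathbf w_j\sim\mathcal N(0,I_d)$ has independent coordinates, the blocks are independent, so the probability of the intersection factorizes. I would also make sure that each per-family bound, obtained through couplings built from that family's own coordinates, is used only as a bound on a marginal probability. It therefore composes with the factorization without needing a joint coupling across families.
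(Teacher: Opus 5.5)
Your proposal is correct and matches the paper's argument, which chains \Cref{fact:cap-succ-prob}, \Cref{fact:bd-succ-prob}, and \Cref{lem:single-family-ub} for each family and then multiplies over the $\lfloor d/2\rfloor$ independent families via \Cref{fact:orig-success-prob}. Your remarks on the constant being uniform across families and on $\lfloor d/2\rfloor \ge d/3$ for $d\ge 2$ spell out details the paper leaves implicit.
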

\Cref{lem:all-families-ub} gives the desired exponentially small upper bound on the probability of successful $\varepsilon$-approximation, as shown in the next section.

\subsection{Back to the union bound}
\label{sec:final-union-bound}

We now conclude the proof by returning to the union bound of \cref{eq:union-bound}.
We split the sum in two, depending on the value of $k$:
\begin{equation} \label{eq:final-union-bound}
    \textstyle 
    \sum_{k=1}^{N_h} \binom{N_h}{k} \, p_k =
    \sum_{k=1}^{T-1} \binom{N_h}{k} \, p_k +
    \sum_{k=T}^{N_h} \binom{N_h}{k} \, p_k
\end{equation}
From \Cref{lem:all-families-ub}, we can prove the following. %
\begin{restatable}{lemma}{twopartsofsum} \label{lem:two_parts_of_sum}
    It holds $\sum_{k=1}^{T-1} \binom{N_h}{k} p_k \le e^{-\Omega(d)}$ and $\sum_{k=T}^{N_h} \binom{N_h}{k} p_k \le e^{-\Omega(d)}$.
\end{restatable}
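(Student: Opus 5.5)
We bound each sum using \Cref{lem:all-families-ub}, i.e.\ $p_k \le e^{-c_2\min(k,T)d}$ for a universal $c_2>0$. Throughout we work under the hypothesis $N_h<\min\{c_0 d/\varepsilon,\,2^{c_1 d}\}$ of \Cref{thm:pruning-lower-bound}, and we choose $c_0,c_1$ small relative to $c_2$ and to the constant $c$ in $T=R/(c\varepsilon)$ from \cref{eq:cap-choice}.

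\emph{First sum ($k<T$).} Here $\min(k,T)=k$, so $p_k\le e^{-c_2 kd}$. Using $\binom{N_h}{k}\le N_h^k$ and $N_h<2^{c_1 d}$, each term is at most
\[
\exp\bigl(k(c_1 d\ln 2 - c_2 d)\bigr)\le e^{-c_2 kd/2},
\]
provided $c_1\ln 2\le c_2/2$. Summing this geometric series over $k\ge 1$ gives at most $e^{-c_2 d/2}/(1-e^{-c_2 d/2})$. Since $d\ge 2$, this is $e^{-\Omega(d)}$.

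\emph{Second sum ($k\ge T$).} Here $p_k\le e^{-c_2Td}$ uniformly in $k$. We bound $\sum_{k\ge T}\binom{N_h}{k}\le 2^{N_h}$. This bound is useful only if $N_h$ is small compared to $Td$. This is exactly where the $d/\varepsilon$ hypothesis is used: $N_h< c_0 d/\varepsilon = c_0 c\, Td/R$. As $R>2$ is a fixed constant, choosing $c_0\ln 2\cdot c/R\le c_2/2$ gives
\[
2^{N_h}e^{-c_2Td}\le e^{-c_2Td/2}.
\]
Since $T\ge 1$, this is $e^{-\Omega(d)}$. If one wants to avoid any dependence of the constant on $R$, the alternative is the entropy bound $\binom{N_h}{k}\le (eN_h/k)^k$. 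With $k\ge T$ and $N_h/T=O(d)$, this gives terms of size $e^{k\log(O(d))}$. However, that only competes with $e^{-c_2Td}$ for $k$ up to roughly $Td/\log d$. So I will use the cruder $2^{N_h}$ bound, absorbing $R$ into $c_0$ as the theorem allows, since $R$ is a fixed constant. The case $N_h<T$, where the second sum is empty, is trivial.

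\emph{Main obstacle.} The delicate point is the second sum: the bound on $p_k$ saturates at $e^{-\Omega(Td)}$ for all $k\ge T$, so no per-$k$ decay is available there. We must therefore beat the total number of subsets, roughly $2^{N_h}$, using only the fact that $N_h\lesssim Td$. This forces the precise relationship between $c_0$, the constant $c$ in $T$, and the $\Omega$-constant in \Cref{lem:all-families-ub}. We also need to check that this $\Omega$-constant is genuinely uniform in $k$, $\varepsilon$ and $d$, i.e.\ that the constants $p_0$, $q_0$ in the birth--death chain do not depend on them.
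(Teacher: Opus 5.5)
Your proof is correct and follows the paper's argument. For $k<T$ you combine $p_k\le e^{-\Omega(kd)}$ with a polynomial bound on $\binom{N_h}{k}$ and $\log N_h\le c_1 d$ to get a geometric series; you use $N_h^k$ where the paper uses $(eN_h/k)^k$, which is immaterial. For $k\ge T$ you use the crude bound $2^{N_h}$ against $e^{-\Omega(Td)}=e^{-\Omega(d/\varepsilon)}$ via $N_h<c_0 d/\varepsilon$.

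Your worry about $c_0$ depending on $R$ is unnecessary. The condition $c_0\ln 2\cdot c/R\le c_2/2$ only gets weaker as $R$ grows, so imposing it at $R=2$ gives one universal $c_0$ valid for every $R>2$.
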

\noindent From \Cref{lem:two_parts_of_sum}, we get that \cref{eq:final-union-bound} is at most $e^{-\Omega(d)}$, proving that with probability at least $1 - e^{-\Omega(d)}$, no neuron-pruned subnetwork
$\varepsilon$-approximates the target ReLU uniformly on the ball of radius $R$.
This completes the proof of \Cref{thm:pruning-lower-bound}.

\section{Conclusion and Future Work}

This work establishes a fundamental separation between neuron pruning and weight
pruning within the framework of the Strong Lottery Ticket Hypothesis.
While weight pruning is known to $\eps$-approximate a single ReLU neuron with
logarithmic dependence on $1/\varepsilon$,
we show that neuron pruning necessitates
$\Omega(1/\varepsilon)$ hidden neurons.
Our results therefore identify neuron pruning as a strictly weaker pruning
mechanism from an approximation-theoretic perspective, even in the clean,
bias-free setting.

An interesting open question concerns the optimal dependence on the input
dimension $d$.
Our lower bound in \Cref{thm:pruning-lower-bound} scales linearly in $d$ (when $\varepsilon = \Omega(2^{-cd})$), whereas results for random feature models
suggest that approximation may in fact require exponentially many neurons in $d$
when doing neuron pruning.
We conjecture that a similar phenomenon should also hold for our simple neuron pruning scenario, and that even
the trivial strategy of retaining a single neuron is asymptotically optimal, as a function of the dimension.
For this simple setting, in which only a single neuron can be retained, we provide a proof of an exponential lower bound in $d$ in \Cref{apx:single-neuron-approx}.
Establishing a dimension-dependent lower bound would further clarify the
limitations of neuron pruning and strengthen the separation with weight pruning.
We leave this question, as well as extensions to deeper architectures and other
activation functions, for future work.

\bibliographystyle{plainnat}
\bibliography{biblio_arxiv}

\begin{thebibliography}{22}
\providecommand{\natexlab}[1]{#1}
\providecommand{\url}[1]{\texttt{#1}}
\expandafter\ifx\csname urlstyle\endcsname\relax
  \providecommand{\doi}[1]{doi: #1}\else
  \providecommand{\doi}{doi: \begingroup \urlstyle{rm}\Url}\fi

\bibitem[Burkholz(2022{\natexlab{a}})]{burkholzConvolutionalResidualNetworks2022}
Rebekka Burkholz.
\newblock Convolutional and {{Residual Networks Provably Contain Lottery
  Tickets}}.
\newblock In \emph{Proceedings of the 39th {{International Conference}} on
  {{Machine Learning}}}, pages 2414--2433, Baltimore, July 2022{\natexlab{a}}.
  PMLR.

\bibitem[Burkholz(2022{\natexlab{b}})]{burkholzMostActivationFunctions2022}
Rebekka Burkholz.
\newblock Most {{Activation Functions Can Win}} the {{Lottery Without Excessive
  Depth}}.
\newblock In \emph{Thirty-Sixth {{Conference}} on {{Neural Information
  Processing Systems}}}, December 2022{\natexlab{b}}.

\bibitem[Burkholz et~al.(2022)Burkholz, Laha, Mukherjee, and
  Gotovos]{burkholzExistenceUniversalLottery2022}
Rebekka Burkholz, Nilanjana Laha, Rajarshi Mukherjee, and Alkis Gotovos.
\newblock On the {{Existence}} of {{Universal Lottery Tickets}}.
\newblock In \emph{International {{Conference}} on {{Learning
  Representations}}}, virtual, April 2022.

\bibitem[{da Cunha} et~al.(2022){da Cunha}, Natale, and
  Viennot]{dacunhaProvingStrongLottery2022}
Arthur {da Cunha}, Emanuele Natale, and Laurent Viennot.
\newblock Proving the {{Strong Lottery Ticket Hypothesis}} for {{Convolutional
  Neural Networks}}.
\newblock In \emph{{{ICLR}} 2022 - 10th {{International Conference}} on
  {{Learning Representations}}}, Virtual, France, April 2022.

\bibitem[da~Cunha et~al.(2023)da~Cunha, D'Amore, and
  Natale]{cunhaPolynomiallyOverParameterizedConvolutional2023}
Arthur da~Cunha, Francesco D'Amore, and Emanuele Natale.
\newblock Polynomially {{Over-Parameterized Convolutional Neural Networks
  Contain Structured Strong Winning Lottery Tickets}}.
\newblock In \emph{Thirty-Seventh {{Conference}} on {{Neural Information
  Processing Systems}}}, November 2023.

\bibitem[Ferbach et~al.(2022)Ferbach, Tsirigotis, Gidel, and
  Bose]{ferbachGeneralFrameworkProving2022}
Damien Ferbach, Christos Tsirigotis, Gauthier Gidel, and Joey Bose.
\newblock A {{General Framework For Proving The Equivariant Strong Lottery
  Ticket Hypothesis}}.
\newblock In \emph{The {{Eleventh International Conference}} on {{Learning
  Representations}}}, September 2022.

\bibitem[Fischer et~al.(2022)Fischer, Gadhikar, and
  Burkholz]{fischerLotteryTicketsNonzero2022}
Jonas Fischer, Advait Gadhikar, and Rebekka Burkholz.
\newblock Lottery {{Tickets}} with {{Nonzero Biases}}, June 2022.

\bibitem[Frankle and Carbin(2018)]{frankleLotteryTicketHypothesis2018}
Jonathan Frankle and Michael Carbin.
\newblock The {{Lottery Ticket Hypothesis}}: {{Finding Sparse}}, {{Trainable
  Neural Networks}}.
\newblock In \emph{International {{Conference}} on {{Learning
  Representations}}}, September 2018.

\bibitem[He and Xiao(2023)]{HeX23}
Yang He and Lingao Xiao.
\newblock Structured pruning for deep convolutional neural networks: {A}
  survey.
\newblock \emph{CoRR}, abs/2303.00566, 2023.
\newblock \doi{10.48550/arXiv.2303.00566}.
\newblock URL \url{https://doi.org/10.48550/arXiv.2303.00566}.

\bibitem[Hoefler et~al.(2021)Hoefler, Alistarh, Ben{-}Nun, Dryden, and
  Peste]{HoeflerABDP21}
Torsten Hoefler, Dan Alistarh, Tal Ben{-}Nun, Nikoli Dryden, and Alexandra
  Peste.
\newblock Sparsity in deep learning: Pruning and growth for efficient inference
  and training in neural networks.
\newblock \emph{J. Mach. Learn. Res.}, 22:\penalty0 241:1--241:124, 2021.
\newblock URL \url{http://jmlr.org/papers/v22/21-0366.html}.

\bibitem[Hsu et~al.(2021)Hsu, Sanford, Servedio, and
  Vlatakis{-}Gkaragkounis]{hsuApproximationTwoLayer2021}
Daniel Hsu, Clayton Sanford, Rocco~A. Servedio, and Emmanouil~V.
  Vlatakis{-}Gkaragkounis.
\newblock On the approximation power of two-layer networks of random relus.
\newblock In Mikhail Belkin and Samory Kpotufe, editors, \emph{Conference on
  Learning Theory, {COLT} 2021, 15-19 August 2021, Boulder, Colorado, {USA}},
  volume 134 of \emph{Proceedings of Machine Learning Research}, pages
  2423--2461. {PMLR}, 2021.
\newblock URL \url{http://proceedings.mlr.press/v134/hsu21a.html}.

\bibitem[Kamath et~al.(2020)Kamath, Montasser, and
  Srebro]{DBLP:conf/colt/KamathMS20}
Pritish Kamath, Omar Montasser, and Nathan Srebro.
\newblock Approximate is good enough: Probabilistic variants of dimensional and
  margin complexity.
\newblock In \emph{Conference on Learning Theory}, pages 2236--2262. PMLR,
  2020.

\bibitem[Lueker(1998)]{Lueker98}
{George S.} Lueker.
\newblock Exponentially small bounds on the expected optimum of the partition
  and subset sum problem.
\newblock \emph{Random Structures and Algorithms}, 12:\penalty0 51--62, 1998.

\bibitem[Malach et~al.(2020)Malach, Yehudai, {Shalev-shwartz}, and
  Shamir]{malachProvingLotteryTicket2020}
Eran Malach, Gilad Yehudai, Shai {Shalev-shwartz}, and Ohad Shamir.
\newblock Proving the lottery ticket hypothesis: Pruning is all you need.
\newblock In \emph{Proceedings of the 37th {{International Conference}} on
  {{Machine Learning}}}, {{ICML}}'20, pages 6682--6691. JMLR.org, July 2020.

\bibitem[Mozer and Smolensky(1988)]{MozerS88}
Michael~C. Mozer and Paul Smolensky.
\newblock Skeletonization: a technique for trimming the fat from a network via
  relevance assessment.
\newblock In \emph{Proceedings of the 2nd International Conference on Neural
  Information Processing Systems}, NIPS'88, page 107–115, Cambridge, MA, USA,
  1988. MIT Press.

\bibitem[Mozer and Smolensky(1989)]{MozerS89}
Michael~C Mozer and Paul Smolensky.
\newblock Using relevance to reduce network size automatically.
\newblock \emph{Connection Science}, 1\penalty0 (1):\penalty0 3--16, 1989.

\bibitem[Orseau et~al.(2020)Orseau, Hutter, and
  Rivasplata]{orseauLogarithmicPruningAll2020}
Laurent Orseau, Marcus Hutter, and Omar Rivasplata.
\newblock Logarithmic pruning is all you need.
\newblock In \emph{Proceedings of the 34th {{International Conference}} on
  {{Neural Information Processing Systems}}}, {{NIPS}}'20, pages 2925--2934,
  Red Hook, NY, USA, December 2020. Curran Associates Inc.
\newblock ISBN 978-1-71382-954-6.

\bibitem[Pensia et~al.(2020)Pensia, Rajput, Nagle, Vishwakarma, and
  Papailiopoulos]{Pensia}
Ankit Pensia, Shashank Rajput, Alliot Nagle, Harit Vishwakarma, and Dimitris
  Papailiopoulos.
\newblock Optimal lottery tickets via subsetsum: logarithmic
  over-parameterization is sufficient.
\newblock In \emph{Proceedings of the 34th International Conference on Neural
  Information Processing Systems}, NIPS '20, Red Hook, NY, USA, 2020. Curran
  Associates Inc.
\newblock ISBN 9781713829546.

\bibitem[Ramanujan et~al.(2020)Ramanujan, Wortsman, Kembhavi, Farhadi, and
  Rastegari]{ramanujanWhatHiddenRandomly2020}
Vivek Ramanujan, Mitchell Wortsman, Aniruddha Kembhavi, Ali Farhadi, and
  Mohammad Rastegari.
\newblock What's {{Hidden}} in a {{Randomly Weighted Neural Network}}?
\newblock In \emph{2020 {{IEEE}}/{{CVF Conference}} on {{Computer Vision}} and
  {{Pattern Recognition}} ({{CVPR}})}, pages 11890--11899, June 2020.
\newblock \doi{10.1109/CVPR42600.2020.01191}.

\bibitem[Wang et~al.(2020)Wang, Zhang, Xie, Zhou, Su, Zhang, and
  Hu]{WangZXZSZH20}
Yulong Wang, Xiaolu Zhang, Lingxi Xie, Jun Zhou, Hang Su, Bo~Zhang, and Xiaolin
  Hu.
\newblock Pruning from scratch.
\newblock \emph{Proceedings of the AAAI Conference on Artificial Intelligence},
  34\penalty0 (07):\penalty0 12273--12280, Apr. 2020.
\newblock \doi{10.1609/aaai.v34i07.6910}.
\newblock URL \url{https://ojs.aaai.org/index.php/AAAI/article/view/6910}.

\bibitem[Yehudai and Shamir(2019)]{yehudaiShamirRandomFeatures}
Gilad Yehudai and Ohad Shamir.
\newblock On the power and limitations of random features for understanding
  neural networks.
\newblock \emph{CoRR}, abs/1904.00687, 2019.
\newblock URL \url{http://arxiv.org/abs/1904.00687}.

\bibitem[Zhou et~al.(2019)Zhou, Lan, Liu, and
  Yosinski]{zhouDeconstructingLotteryTickets2019}
Hattie Zhou, Janice Lan, Rosanne Liu, and Jason Yosinski.
\newblock Deconstructing {{Lottery Tickets}}: {{Zeros}}, {{Signs}}, and the
  {{Supermask}}.
\newblock In \emph{Advances in Neural Information Processing Systems 32: Annual
  Conference on Neural Information Processing Systems 2019 (NIPS 2019)}, page
  3592–3602, 2019.

\end{thebibliography}
\newpage

\appendix
\crefalias{section}{appendix}

\section{Additional Figures}
\label{apx:figures}
\usetikzlibrary{fit}

\begin{figure}[h]
\centering
\begin{tikzpicture}[x=0.65cm,y=0.85cm,>=stealth]

\def\T{10}   %
\def\posA{2}
\def\posB{6}
\def\posC{9}
\def\posD{5}

\tikzset{
  state/.style={circle,draw,inner sep=1.2pt},
  marker/.style={circle,fill,inner sep=2.0pt},
  line/.style={draw,thick},
  label/.style={anchor=east,font=\small}
}

\newcommand{\drawwalkline}[4]{%
  \node[label] at (-1.2,#1) {#2};

  \draw[line] (0,#1) -- (\T,#1);

  \node[state] (zero-#4) at (0,#1) {};
  \node[state] (Tend-#4) at (\T,#1) {};

  \foreach \s in {1,...,\numexpr\T-1\relax} {
    \node[state] at (\s,#1) {};
  }

  \node[font=\small,anchor=south] at (0,#1+0.35) {$0$};
  \node[font=\small,anchor=south] at (\T,#1+0.35) {$T$};

  \node[marker] at (#3,#1) {};
}

\drawwalkline{0}{Input family $i=1$}{\posA}{A}
\drawwalkline{-1}{Input family $i=2$}{\posB}{B}

\node[font=\Large] at (\T/2,-2.0) {$\vdots$};

\drawwalkline{-3}{Input family $i=\lfloor d/2\rfloor-1$}{\posC}{C}
\drawwalkline{-4}{Input family $i=\lfloor d/2\rfloor$}{\posD}{D}

\node[draw=cyan,very thick,rounded corners,inner sep=4pt,fit=(zero-A) (zero-B) (zero-C) (zero-D)] (allzeros) {};

\node[font=\small,align=center] at (\T/2,-5.2)
{success requires\\ all chains at $0$};
\draw[->,cyan!70!black] (\T/2-2,-4.8) -- (allzeros.south);

\end{tikzpicture}
\caption{
Stacked state-line representation of the birth--death chain $(B^{\mathrm{bd}}_s)_{s\ge 0}$.
Each horizontal line is the state space $\{0,1,\dots,T\}$ for one input family, and the filled
dot marks the current value of $B^{\mathrm{bd}}_s$.
A necessary condition for $\varepsilon$-approximation is that all $\lfloor d/2\rfloor$
independent chains reach state $0$.
}
\label{fig:RWs}
\end{figure}

\section{Omitted Proofs}
\label{apx:omitted-proofs}

\brokenbinpreventsapprox*
\begin{proof}
    Since $t^\star \notin I_B$, the function $f$ is linear on $I_B$.
    By \Cref{def:broken-bin}, for any affine function $\ell$,
    there exist three points in $I_B$ on which $h$ differs from $\ell$
    by at least $c\varepsilon$.
    Taking $\ell = f$ yields the claim.
\end{proof}

\begin{figure}[t]
    \centering
    \begin{tikzpicture}

        \def\tstar{-0.8}    %
        \def\tneu{1.7}      %
        \def\mt{0.95}       %
        \def\mn{0.55}       %
        \def\epsw{0.7}      %
        \def\tprobe{\tstar+\epsw} %

        \draw[->] (-3.6,0) -- (3.5,0) node[right] {$t$};
        \draw[->] (1,-0.4) -- (1,3.5) node[above] {$y$};

        \draw[red,very thick,opacity=0.75] (-3.6,0) -- (\tstar,0);
        \draw[red,very thick,opacity=0.75] (\tstar,0) -- (3.2,{(\mt)*(3.2-\tstar)});

        \draw[cyan,very thick,opacity=0.75] (-3.6,0) -- (\tneu,0);
        \draw[cyan,very thick,opacity=0.75] (\tneu,0) -- (3.2,{(\mn)*(3.2-\tneu)});

        \draw[dashed] (\tstar,0) -- (\tstar,3.35);
        \node[below] at (\tstar,0) {$t_i^\star$};

        \draw[cyan,dashed] (\tneu,0) -- (\tneu,3.35);
        \node[below,cyan] at (\tneu,0) {$t_{i,j}$};

        \draw[decorate,decoration={brace,mirror}]
        (\tstar-\epsw,-0.55) -- (\tstar+\epsw,-0.55)
        node[midway,below=1pt] {\small $\varepsilon$-window};

        \draw[<->,thick]
            (\tprobe,{(\mt)*(\tprobe-\tstar)}) -- (\tprobe,{max(0,(\mn)*(\tprobe-\tneu))})
            node[midway,right=-1pt] {\small error};

        \node[anchor=west, red]  at (2.7,3.05) {\small target ReLU(red)};
        \node[anchor=west, cyan] at (2.7,1.2) {\small neuron ReLU(cyan)};

    \end{tikzpicture}
    \caption{Breakpoint alignment intuition along a one-dimensional input family $\mathbf{x}_i(t)$.
    Along $\mathbf{x}_i(t)$, a bias-free ReLU neuron $\sigma(\langle w,x\rangle)$ reduces to the one-dimensional function $t \mapsto \sigma(w_{2i-1}t + w_{2i})$, whose breakpoint is $t_i(w) = -w_{2i}/w_{2i-1}$.
    The target (red) has breakpoint $t_i^\star$, while a randomly drawn neuron (cyan) has breakpoint $t_{i,j}$.
    If $|t_{i,j}-t_i^\star|>\varepsilon$, then on the $\varepsilon$-neighborhood of $t_i^\star$ the cyan function is linear (in the picture, flat) whereas the target changes slope, yielding a nontrivial uniform error, as stated in \Cref{lem:breakpoint-necessary}.}
    \label{fig:breakpoint}
\end{figure}

\breakpointnecessary*
\begin{proof}
    By translation, we may assume $t^\star=0$, so that
    $f(t)=\sigma(a t)$.
    Let $h(t)=\alpha t+\beta$ be any linear function.
    Consider the three points $t=-\delta,\,0,\,\delta$.
    At these points,
    $
        f(-\delta)=0,$ $
        f(0)=0,$ $
        f(\delta)=a\delta$ $\text{if } a>0,$
    and the roles of $\pm\delta$ are reversed if $a<0$.
    In either case, $|f(\delta)-f(-\delta)|=|a|\delta$.
    Define the errors
   $
        \Delta_-  := h(-\delta)-f(-\delta), 
        \Delta_0  := h(0)-f(0),$ $      
        \Delta_+  := h(\delta)-f(\delta).
    $
    Since $h$ is linear,
    $
        h(0)=\tfrac12\bigl(h(-\delta)+h(\delta)\bigr),
    $
    which implies
    $
        \Delta_0
        =
        \tfrac12\bigl(\Delta_-+\Delta_+ + |a|\delta\bigr)
    $
    (up to relabeling the endpoints when $a<0$).

    If $|\Delta_-|<\tfrac{|a|\delta}{4}$ and
    $|\Delta_+|<\tfrac{|a|\delta}{4}$, then
    $
        \Delta_0
        >
        \tfrac12\!\left(-\tfrac{|a|\delta}{4}-\tfrac{|a|\delta}{4}
        +|a|\delta\right)
        =
        \tfrac{|a|\delta}{4},
    $
    so $|\Delta_0|\ge \tfrac{|a|\delta}{4}$.
    Therefore, at least one of
    $|\Delta_-|,|\Delta_0|,|\Delta_+|$
    is at least $\tfrac{|a|\delta}{4}$, and hence
    \[
        \sup_{t\in[-\delta,\delta]}|f(t)-h(t)|
        \;\ge\;
        \tfrac{|a|\delta}{4}.
    \]

    Taking $c=\nicefrac{|a|}{4}$ concludes the proof.
\end{proof}

\singlebreakpointbreaksbin*
\begin{proof}
    Let $\Delta=\varepsilon/4$ and define three points
    $
        t_1 := t_0-\Delta, 
        t_2 := t_0, 
        t_3 := t_0+\Delta,
    $
    which all lie in $B$.
    Since $h$ is linear on each side of $t_0$, we have
    \[
        \frac{h(t_2)-h(t_1)}{t_2-t_1}=m_-,
        \qquad
        \frac{h(t_3)-h(t_2)}{t_3-t_2}=m_+.
    \]

    Let $\ell(t)=at+b$ be any affine function.
    Assume that $\ell$ fits the three points with error at most $\eta\varepsilon$, i.e.,
    $
        |h(t_j)-\ell(t_j)| \le \eta\varepsilon
        $ $\text{for } j=1,2,3.
    $
    We compare slopes. Since $\ell$ is affine,
    \[
        a=\frac{\ell(t_2)-\ell(t_1)}{t_2-t_1}.
    \]
    Thus,
    \[
        |a-m_-|
        =
        \left|
        \frac{(\ell(t_2)-h(t_2))-(\ell(t_1)-h(t_1))}{t_2-t_1}
        \right|
        \le
        \frac{2\eta\varepsilon}{\Delta}
        =8\eta.
    \]
    The same argument applied to $(t_2,t_3)$ gives
    $
        |a-m_+|\le 8\eta.
    $
    By the triangle inequality,
    $
        |m_+-m_-|
        \le
        |m_+-a|+|a-m_-|
        \le
        16\eta.
    $
    Since $|m_+-m_-|\ge\gamma$, this implies $\eta\ge\gamma/16$.
    Therefore, for every affine function $\ell$,
    \[
        \max_{j=1,2,3}|h(t_j)-\ell(t_j)|
        \ge
        \frac{\gamma}{16}\,\varepsilon.
    \]
    Taking $c=\gamma/16$ concludes the proof.
\end{proof}

\singlefamilyub*
\begin{proof}
    Let $R_k$ and $L_k$ denote the number of right moves and left moves, respectively,
    during the $k$ steps of the process. 
    A right move happens when a new broken bin is introduced, and a left move when it is removed.
    Then $B_k^{\mathrm{bd}} = 1 + R_k - L_k$, which means that in order to have $B_k^{\mathrm{bd}} = 0$ we must have $L_k > R_k$. By the law of total probability, we thus have
    \begin{align}
        \Pr(B_k^{\mathrm{bd}} = 0)
        & = \Pr(1 + R_k = L_k) \nonumber\\
         & \leq \Pr(L_k > R_k)                                                                             \nonumber      \\
         & = \Pr\left(L_k > R_k \mid R_k < p k / 2\right)\Pr(R_k < p k / 2)  +         \nonumber                       \\
         & \qquad + \Pr\left(L_k > R_k \mid R_k \geq p k / 2\right)\Pr(R_k \geq p k / 2)    \nonumber                  \\
         & \leq \Pr(R_k < p k / 2) + \Pr\left(L_k > R_k \mid R_k \geq p k / 2\right)     \nonumber                     \\
         & \leq \Pr(R_k < p k / 2) + \Pr\left(L_k \ge p k / 2 \mid R_k \geq p k / 2\right)     \nonumber                     \\
         & \leq \Pr(R_k < p k / 2) + \Pr\left(L_k \ge p k / 2 \right),         \label{eq:necessary_steps_left_small_k}
    \end{align}
    where, in the last inequality, we used that conditioning on performing at least a certain number of right moves can only decrease the probability of performing at least a given number of left moves, since those moves cannot be to the left.
    More formally, if we define $h(r) := \Pr(L_k\ge pk/2 \mid R_k=r)$ we have
    \begin{equation*}
        \Pr\!\left(L_k \ge \tfrac{pk}{2}\!\right) = \sum_{r=0}^k h(r) \Pr(R_k=r) = \mathbb{E}[h(R_k)],
    \end{equation*}
    and, similarly,
    \begin{equation*}
    \Pr\!\left(L_k \ge \tfrac{pk}{2} \mid R_k \ge \tfrac{pk}{2}\right)
    = \sum_{r=\frac{pk}{2}}^k h(r)\Pr\!\left(R_k=r \mid R_k \ge \tfrac{pk}{2}\right)
    = \mathbb{E}\!\left[h(R_k) \mid R_k \ge \tfrac{pk}{2}\right].
    \end{equation*}
    Now, since $h(r)$ is a decreasing function of $r$, it follows that
    \begin{equation*}
        \mathbb{E}\!\left[h(R_k) \mid R_k \geq \tfrac{pk}{2}\!\right] \le \mathbb{E}[h(R_k)]
        \; \iff \; 
        \Pr\!\left(L_k \ge \tfrac{pk}{2} \mid R_k \ge \tfrac{pk}{2}\right) 
        \leq \Pr\!\left(L_k \ge \tfrac{pk}{2}\!\right),
    \end{equation*}
    justifying the last step in \cref{eq:necessary_steps_left_small_k}.

    Intuitively, \cref{eq:necessary_steps_left_small_k} states that the probability that the process ends in a state $B_k^{\mathrm{bd}} = 0$ can be bounded by the event that the number of right moves is too small, or that the number of left moves is at least the number of right ones.

    \textbf{Case 1: $k \leq 2T/3$.}  
   Since $k \le 2T/3$ the cap at $T$ of the birth-death process  is never reached.
    Since each step produces a right move independently with probability $p$, then
    $\mathbb{E}[R_k] = pk$, and by a Chernoff bound $\Pr(R_k < p k / 2) \leq e^{-\Omega(k)}$.
    On the other hand, each step produces a left move with probability at most
    $q \le p/2 $ is a constant.
    Hence $\mathbb{E}[L_k] \le q_0 k$.
    Applying a Chernoff bound to $L_k$, we obtain $\Pr\!\left(L_k \ge \tfrac{p}{2} k\right)\leq e^{-\Omega(k)}$, hence
    \begin{equation}
        \label{eq:left-steps}
        \Pr(R_k < p k / 2) + \Pr\left(L_k \ge p k / 2 \right)
        \le\;
        e^{-\Omega(k)}.
    \end{equation}

    \textbf{Case 2: $k > 2T/3$.} At step $\hat k = k-2T/3$, there are two possibilities: either $B_{\hat k}^{\mathrm{bd}} \leq T/3$ or $B_{\hat k}^{\mathrm{bd}} > T/3$.
    If $B_{\hat k}^{\mathrm{bd}} \leq T/3$, then we can use the same argument as in Case 1 to bound the probability of success, as the cap at $T$ of the birth-death process will not be reached after $k$ steps.
    If $B_{\hat k}^{\mathrm{bd}} > T/3$, then the probability of reaching state 0 in the remaining $2T/3$ is bounded by the probability that the process does at least $T/3$ left moves, which is at most $e^{-\Omega(T)}$. Formally, we have
    \begin{align}
        \Pr(B_k^{\mathrm{bd}} = 0)
         & \leq \Pr(L_k > R_k) \nonumber                                                                                             \\
         & = \Pr\left(L_k > R_k \mid B_{\hat k}^{\mathrm{bd}} \leq T/3\right)\Pr(B_{\hat k}^{\mathrm{bd}} \leq T/3)  +  \nonumber \\
         & \qquad + \Pr\left(L_k > R_k \mid B_{\hat k}^{\mathrm{bd}} > T/3\right)\Pr(B_{\hat k}^{\mathrm{bd}} > T/3) \nonumber    \\
         & \leq \Pr\left(L_k > R_k\mid B_{\hat k}^{\mathrm{bd}} \leq T/3\right)
        + \Pr\left(L_k > R_k \mid B_{\hat k}^{\mathrm{bd}} > T/3\right) \nonumber                                                 \\
         & \leq \Pr\left(L_k - L_{\hat k} \geq R_k - R_{\hat k}\mid B_{\hat k}^{\mathrm{bd}} \leq T/3\right)
        + \Pr\left(L_k - L_{\hat k} > T / 3 \mid B_{\hat k}^{\mathrm{bd}} > T/3\right) \nonumber                                                 .         \label{eq:necessary_steps_left_large_k}
    \end{align}
    The first term in the last line are bounded by $e^{-\Omega(T)}$ as shown in Case 1 using similar arguments: note that starting at position larger than 1, only makes it less likely to reach $0$.
    The second term is bounded by a standard Chernoff bound. 

    Putting both cases together, we get 
    \[
        \Pr\left(B_k^{\mathrm{bd}} = 0 \right)
        \le e^{-\Omega(\min(k,T))}.
    \]
    \end{proof}

\twopartsofsum*
\begin{proof}
    For $k \le T$, from \Cref{lem:all-families-ub} we get that
$
    p_k \;\le\; e^{-c_4 k d}
$
for some constant $c_4>0$.
Using the bound $\binom{N_h}{k} \le (eN_h/k)^k$, we obtain
\begin{equation} 
    \label{eq:small-k}
     \sum_{k=1}^{T-1} \binom{N_h}{k} \, p_k
    \;\le\;
     \sum_{k=1}^{T-1} e^{-
    k \bigl(c_4 d - \log N_h + O(1)\bigr) }  
    \stackrel{(a)}{\leq} \sum_{k=1}^{T-1}  e^{-\Omega(kd)} 
    \stackrel{(b)}{\leq} e^{-\Omega(d)} , 
\end{equation}
where in $(a)$ we used that $\log N_h \leq  cd$, 
and in $(b)$ we upper bounded $\sum_{k=1}^{T-1} e^{-\Omega(kd)} \leq \sum_{k=1}^{\infty} (e^{-\Omega(d)})^k $ and used that 
$\sum_{k=1}^{\infty} x^k = \frac{x}{1-x} $. 

For $k > T$, given \Cref{lem:all-families-ub} and the choice of $T$ in \cref{eq:cap-choice}, we get
$
    p_k \;\le\; e^{-c_5 d/\varepsilon}
$
for some constant $c_5>0$.
Using the crude bound $\sum_{k>k_0} \binom{N_h}{k} \le 2^{N_h}$, we obtain
\begin{equation}
\label{eq:big-k}
    \sum_{k=T}^{N_h} \binom{N_h}{k} \, p_k
    \;\le\;
    2^{N_h} e^{-c_5 d/\varepsilon}
    \leq  e^{-\Omega(d/\varepsilon)} 
    \leq e^{-\Omega(d)},
\end{equation}
where in the second to last inequality we used the hypothesis $N_h \leq c d/\varepsilon$.
\end{proof}

\section{Extension of \Cref{thm:pruning-lower-bound}}
\label{sec:appendix-output-weights}

We briefly explain how the proof of \Cref{thm:pruning-lower-bound} extends to the case where the output weights are not fixed to $1$, but have magnitude bounded away from $0$. Consider the network
$
g(\mathbf{x})=\sum_{j=1}^{N_h}\alpha_j\,\sigma(\langle \mathbf{w}_j,\mathbf{x}\rangle),
$
where the hidden weights $\mathbf{w}_j\sim\mathcal N(0,I_d)$ are independent, and where the coefficients $\alpha_j$ (deterministic or random) satisfy $|\alpha_j|\ge a_-$, for some constant $a_->0$ and all $j$.

The proof proceeds exactly as in the unit-weight case after observing that, along any input family $\mathbf{x}_i(t)$, the restriction of the network is
$
g(\mathbf{x}_i(t))=\sum_{j\in S}\alpha_j\,\sigma(w_{j,2i-1}t+w_{j,2i}).
$
The breakpoint locations remain
$
t_{i,j}=-w_{j,2i}/w_{j,2i-1},
$
so they do not depend on the coefficients $\alpha_j$. The only change is that the slope jump at such a breakpoint is now multiplied by $\alpha_j$, namely
$
\Delta_{i,j}=\alpha_j w_{j,2i-1}.
$
Since $|\alpha_j|\ge a_-$, we have $|\Delta_{i,j}|\ge a_-|w_{j,2i-1}|$, so every estimate in the proof that uses a lower bound on the slope jump still goes through with constants depending on $a_-$.

In particular, the breakpoint-based lemma, the construction of the capped and birth--death processes, and the final counting/union-bound argument are unchanged up to constant factors. The only effect of replacing unit output weights by coefficients bounded away from zero is therefore a modification of the numerical constants in the proof. This yields the same lower bound on the required width, with constants depending on $a_-$.

\section{Proof of \Cref{thm:pruning-lower-bound-log}}
\label{apx:proof-lower-bound-log}

\pruninglowerboundlog*
\begin{proof}
Because the Gaussian distribution $\mathcal{N}(0,I_d)$ and the input domain $\{\mathbf x : \|\mathbf x\|_2 \le R\}$ are perfectly symmetric, the problem is invariant to orthogonal transformations. Therefore, we can rotate the coordinate system so that the vector $\mathbf{w}^\star$ aligns with the first standard basis vector. Thus, without loss of generality, we assume $\mathbf{w}^\star = \sqrt{d} \mathbf{e}_1$.

For each $i\in\{2,\dots,d\}$, define the one-dimensional input family
$
\mathbf x_i(t):=t\,\mathbf e_1+\mathbf e_i, t\in\mathbb R
$.
Along this path, the output of the target neuron is
$
f(\mathbf x_i(t))=\sigma(\langle \sqrt{d} \mathbf e_1,\mathbf x_i(t)\rangle)=\sigma(\sqrt{d}t)
$,
so its breakpoint is at $t^\star=0$. 
Note that despite the shared notation, these are distinct from the input families considered in the proof of \Cref{thm:pruning-lower-bound}.
Let $\eps' = \eps/\sqrt{d}$,
and let
$
I:= [-\eps',\eps']
$.
By Lemma~\ref{lem:breakpoint-necessary}, applied to $f(t)=\sigma(\sqrt{d}t)$ and $I$, any function that is linear on an interval containing $I$ cannot approximate $\sigma(\sqrt{d}t)$ on $I$ with error smaller than $\eps / 4$.

Now fix a subset $S\subseteq\{1,\dots,N_h\}$ and consider the pruned network
$
g_S(\mathbf x)=\sum_{j\in S}\alpha_j\,\sigma(\langle \mathbf w_j,\mathbf x\rangle)
$,
where each hidden neuron
$j$ has an incoming weight vector $\mathbf{w_j}=(w_{j,1},\dots,w_{j,d})$.
Its restriction to the $i$-th family $\mathbf x_i(t)$ is
\[
g_{S,i}(t):=g_S(\mathbf x_i(t))
=\sum_{j\in S}\alpha_j\,\sigma(w_{j,1}t+w_{j,i}),
\]
whose breakpoints are exactly
$t_{i,j}:=-\nicefrac{w_{j,i}}{w_{j,1}}$, $j\in S$.
If none of these breakpoints falls in $I$, then $g_{S,i}$ is linear on $I$ and cannot $(\varepsilon/4)$-approximate $\sigma(\sqrt{d}t)$ there.
Hence, for each input family $i\in\{2,\dots,d\}$, successful approximation implies the event
\[
E_i:=\Bigl\{\exists\,j\in\{1,\dots,N_h\}\text{ such that }t_{i,j}\in I\Bigr\}.
\]
In particular, overall success implies $\bigcap_{i=2}^d E_i$.
We now bound the probability of these events. Let $Z_j:=w_{j,1}$ and write
\[
L_1:=\sum_{j=1}^{N_h}|Z_j| \qquad \text{and} \qquad L_2:=\sum_{j=1}^{N_h}Z_j^2.
\]
Intuitively, by controlling the sums \(L_1=\sum_j |Z_j|\) and
\(L_2=\sum_j Z_j^2\), we rule out the possibility that the shared denominators are
collectively large enough to make breakpoint hits likely across all input families.

For fixed $i$ and $j$, conditional on $Z_j=z$, the breakpoint condition $t_{i,j}\in I$ is equivalent to $|w_{j,i}|\le \varepsilon' |z|$. 
Because $w_{j,i}\sim\mathcal N(0,1)$, the probability of the complementary event $\Pr(|w_{j,i}| > x)$ equals $1$ at $x=0$ and decays as roughly $e^{-x^2/2}$ for large $x$. Consequently, there exist universal constants $a, b > 0$ such that
\[
\Pr(|w_{j,i}| > x) \ge \exp(-a x - b x^2) \qquad \text{for all } x \ge 0.
\]
Thus, the conditional probability that a single unit has no breakpoint in $I$ is bounded below by
\[
\Pr(t_{i,j}\notin I\mid Z_j=z) \ge \exp(-a \varepsilon' |z| - b \varepsilon'^2 z^2).
\]
Consequently, conditioning on $Z:=(Z_1,\dots,Z_{N_h})$ and using independence across neurons, we can bound the probability of $E_i$ by:
\[
\Pr(E_i\mid Z) = 1 - \prod_{j=1}^{N_h} \Pr(t_{i,j}\notin I\mid Z_j) \le 1-\exp\Bigl(-a\varepsilon' L_1 - b\varepsilon'^2 L_2\Bigr).
\]
Moreover, conditioned on $Z$, the events $E_2,\dots,E_d$ are independent, because they depend on disjoint coordinate families $\{w_{j,i}\}_{j=1}^{N_h}$ for different $i\ge 2$. Therefore,
\[
\Pr(\text{success}\mid Z)
\le \Bigl(1-\exp(-a\varepsilon' L_1 - b\varepsilon'^2 L_2)\Bigr)^{d-1}.
\]

It remains to control $L_1$ and $L_2$. Since $Z_j \sim \mathcal{N}(0,1)$, we have $\mathbb{E}[|Z_j|] = \sqrt{2/\pi} < 1$ and $\mathbb{E}[Z_j^2] = 1$, so $\mathbb{E}[L_1] < N_h$ and $\mathbb{E}[L_2] = N_h$. By Markov's inequality, 
\[
\Pr\left(L_1 > \frac{4 N_h}{\delta}\right) \le \frac{\mathbb{E}[L_1]}{4 N_h / \delta} < \frac{\delta}{4}, \qquad \text{and} \qquad \Pr\left(L_2 > \frac{4 N_h}{\delta}\right) \le \frac{\mathbb{E}[L_2]}{4 N_h / \delta} = \frac{\delta}{4}.
\]
By a union bound on these two events, with probability at least $1 - \delta/2$, both $L_1 \le K N_h$ and $L_2 \le K N_h$ hold, where $K = 4/\delta$. On this high-probability event, and using $\varepsilon' \in (0,1)$, we have
\[
a\varepsilon' L_1 + b\varepsilon'^2 L_2 \le a\varepsilon' K N_h + b\varepsilon' K N_h = (a+b)K N_h \varepsilon'.
\]
Let $c_1 = a+b$. We then have
\[
\Pr(\text{success}\mid Z)
\le \Bigl(1-\exp(-c_1 K N_h\varepsilon')\Bigr)^{d-1}.
\]
Now choose $c_0>0$ small enough so that $c_1Kc_0\le 1/2$. If $N_h < c_0\frac{\log d}{\varepsilon'}$, then $\exp(-c_1 K N_h\varepsilon') \ge d^{-1/2}$, and hence for $d\geq 2$, 
\[
\Pr(\text{success}\mid Z)
\le (1-d^{-1/2})^{d-1}
\leq (e^{-d^{-1/2}})^{d-1}
\le e^{-\sqrt d/2}.
\]
For all sufficiently large $d$, this is at most $\delta/2$; the finitely many remaining dimensions can be absorbed by shrinking $c_0$ once more if needed. Therefore, using the total law of probability
\[
\Pr(\text{success})
\le 1\cdot \Pr(L_1 > K N_h \cup L_2 > K N_h)+\Pr(\text{success}\mid L_1, L_2 \le K N_h)\cdot 1
\le \delta.
\]
This proves the theorem.
\end{proof}

\section{ReLU Approximation with a Single Neuron}
\label{apx:single-neuron-approx}
\begin{lemma}[Single-neuron approximation requires exponential width]
    \label{lem:single-neuron-exponential}
    Given $\mathbf{w^\star} \in \mathbb{R}^d$, with $\|\mathbf{w^\star}\|_2 = 1$,
    let
    $
        f(\mathbf{x}) := \sigma\bigl(\langle \mathbf{w^\star}, x\rangle\bigr)
    $
    be a target ReLU neuron without bias.
    Let $\mathbf{w}$ be sampled uniformly at random from the unit sphere $\mathbb{S}^{d-1}$, and
    consider approximations of the form
    \[
        g(\mathbf{x}) := a\,\sigma\bigl(\langle \mathbf{w}, \mathbf{x}\rangle\bigr), \qquad a\in\mathbb{R}.
    \]
    There exist universal constants $c,C>0$ such that for any $\varepsilon\in(0,1)$,
    \[
        \Pr\!\left(
        \inf_{a\in\mathbb{R}}
        \sup_{\|\mathbf{x}\|_2\le 1}
        \bigl|f(\mathbf{x})-g(\mathbf{x})\bigr|
        \le
        \varepsilon
        \right)
        \;\le\;
        (C\varepsilon)^{c d}.
    \]
    Consequently, if a width-$N_h$ random network is initialized with i.i.d.\ weights
    $\mathbf{w_1},\dots,\mathbf{w_{N_h}}\sim\mathrm{Unif}(\mathbb{S}^{d-1})$ and only a single neuron may be
    retained, then achieving uniform $\varepsilon$-approximation of $f$ with non-negligible
    probability requires $N_h \ge e^{\Omega(d)}$.
\end{lemma}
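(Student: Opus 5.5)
Our plan is to reduce uniform $\varepsilon$-approximation by a single scaled neuron to a statement about the angle between $\mathbf w$ and $\mathbf w^\star$, and then bound the probability of a small angle by a spherical cap estimate.

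\emph{Step 1 (deterministic reduction).} Suppose $a\in\mathbb R$ satisfies $\sup_{\|\mathbf x\|_2\le 1}|f(\mathbf x)-a\sigma(\langle\mathbf w,\mathbf x\rangle)|\le\varepsilon$, and write $\theta\in[0,\pi]$ for the angle between $\mathbf w$ and $\mathbf w^\star$. We test on three inputs.
\begin{itemize}[noitemsep,nolistsep,leftmargin=2em]
\item At $\mathbf x=\mathbf w^\star$ we get $|1-a\cos^+\theta|\le\varepsilon$, where $\cos^+\theta:=\sigma(\cos\theta)$. Since $\varepsilon<1$, this forces $\cos\theta>0$ and $a\cos\theta\ge 1-\varepsilon$.
\item At $\mathbf x=\mathbf w$ we get $|\sigma(\cos\theta)-a|\le\varepsilon$, so $a\le 1+\varepsilon$. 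Combined with the first bullet, $a\cos\theta\ge1-\varepsilon$ and $a\le 1+\varepsilon$ give $\cos\theta\ge (1-\varepsilon)/(1+\varepsilon)$. For $\varepsilon\le 1/2$ this implies $\theta\le C_1\sqrt{\varepsilon}$.
\item To get a stronger, linear-in-$\varepsilon$ constraint, we take $\mathbf u$ to be the unit vector in $\mathrm{span}(\mathbf w,\mathbf w^\star)$ orthogonal to $\mathbf w$ that lies on the side where $\langle\mathbf w^\star,\mathbf u\rangle=\sin\theta>0$. Then $g(\mathbf u)=0$ while $f(\mathbf u)=\sin\theta$, so $\sin\theta\le\varepsilon$. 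Since $\theta<\pi/2$ by the first bullet, this gives $\theta\le \pi\varepsilon/2$.
\end{itemize}
Hence the success event is contained in $\{\theta\le C_2\varepsilon\}$ for a universal constant $C_2$. (For $\varepsilon$ above a fixed constant the claim is trivial once $C$ is chosen large enough, so we may take $\varepsilon$ small.) Note that the infimum over $a$ is handled automatically: we only ever used the existence of one good $a$.

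\emph{Step 2 (cap measure).} For $\mathbf w\sim\mathrm{Unif}(\mathbb S^{d-1})$, we bound $\Pr(\theta\le\phi)$ by the normalized measure of a spherical cap of angular radius $\phi$. The standard estimate gives, for $\phi\le\pi/2$,
\[
\Pr(\theta\le\phi)=\frac{\int_0^\phi\sin^{d-2}s\,ds}{\int_0^\pi\sin^{d-2}s\,ds}\le C_3\sqrt d\,\phi^{\,d-1}.
\]
Here we used $\sin s\le s$ in the numerator and the fact that the denominator is of order $1/\sqrt d$. Setting $\phi=C_2\varepsilon$ yields a bound of $\sqrt d\,(C_4\varepsilon)^{d-1}$, which is at most $(C\varepsilon)^{cd}$ after absorbing $\sqrt d\le 2^{d}$ into the base and taking $c=1/2$, $d\ge2$.

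\emph{Step 3 (consequence for width).} With $N_h$ i.i.d.\ neurons, a union bound shows that the probability some single neuron succeeds is at most $N_h(C\varepsilon)^{cd}$. For any fixed $\varepsilon<1/(2C)$ this is non-negligible only if $N_h\ge (C\varepsilon)^{-cd}=e^{\Omega(d)}$.

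\emph{Main obstacle.} The delicate part is Step 1, specifically choosing test points that pin the angle down linearly in $\varepsilon$ rather than only as $\sqrt\varepsilon$. The orthogonal test point $\mathbf u$ is what achieves this. Care is also needed with the sign of $a$ and of $\cos\theta$, which the first test point settles. The cap computation in Step 2 is routine.
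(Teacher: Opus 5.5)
Your proposal is correct and follows essentially the same route as the paper: test at $\mathbf w^\star$, then test at the unit vector of $\mathrm{span}\{\mathbf w,\mathbf w^\star\}$ orthogonal to $\mathbf w$ to get $\sin\theta\le\varepsilon$, then apply a spherical-cap estimate and a union bound over the $N_h$ neurons. Your version is slightly more careful than the paper's: you use the first test point to force $\cos\theta>0$, which is what actually excludes the branch of $\sin\theta\le\varepsilon$ with $\theta$ near $\pi$ (the paper leaves this implicit), and you compute the cap bound explicitly rather than citing it; the test at $\mathbf w$ is redundant.
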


\begin{proof}
    Fix $\varepsilon\in(0,1)$ and suppose that for some $a\in\mathbb{R}$,
    \begin{equation}
        \label{eq:uniform-approx}
        \sup_{\|x\|_2\le 1}
        \bigl|\sigma\bigl(\langle \mathbf{w^\star}, \mathbf{x}\rangle\bigr)
        -
        a\,\sigma\bigl(\langle \mathbf{w}, \mathbf{x}\rangle\bigr)\bigr|
        \;\le\;
        \varepsilon.
    \end{equation}

    Evaluating~\eqref{eq:uniform-approx} at $\mathbf{x}=\mathbf{w^\star}$ yields
    \[
        \bigl|1 - a\,\sigma\bigl(\langle \mathbf{w}, \mathbf{w^\star}\rangle\bigr)\bigr| \le \varepsilon.
    \]
    Since $\sigma(\langle \mathbf{w}, \mathbf{w^\star}\rangle)\le 1$, this implies $|a|\ge 1-\varepsilon$, and in
    particular $|a|=\Theta(1)$.
    Thus the scalar coefficient $a$ cannot compensate for a significant misalignment
    between $\mathbf{w}$ and $\mathbf{w^\star}$.

    Let $\theta\in[0,\pi]$ denote the angle between $\mathbf{w}$ and $\mathbf{w^\star}$, so that
    $
        \langle \mathbf{w^\star}, \mathbf{w}\rangle = \cos\theta.
    $
    Consider the two-dimensional subspace $\mathrm{span}\{\mathbf{w^\star},\mathbf{w}\}$, and choose a unit vector
    $\mathbf{x}$ in this plane such that
    $\langle \mathbf{w^\star}, \mathbf{x}\rangle = \sin\theta$ and $\langle \mathbf{w}, \mathbf{x}\rangle = 0$.
    Such a choice is always possible by elementary planar geometry.
    For this $\mathbf{x}$ we have
    $
        \sigma\bigl(\langle \mathbf{w^\star}, \mathbf{x}\rangle\bigr) = \sin\theta
    $ and
    $
        \sigma\bigl(\langle \mathbf{w}, \mathbf{x}\rangle\bigr) = 0
    $,
    and therefore
    \[
        |f(\mathbf{x})-g(\mathbf{x})| = \sin\theta.
    \]
    Combining with~\eqref{eq:uniform-approx} yields
    \[
        \sin\theta \le \varepsilon,
    \]
    which implies $\theta \le 2\varepsilon$ for all sufficiently small $\varepsilon$.

    Since $\mathbf{w}$ is uniformly distributed on $\mathbb{S}^{d-1}$, the probability that
    $\angle(\mathbf{w},\mathbf{w^\star})\le 2\varepsilon$ equals the normalized surface area of a spherical cap of
    half-angle $2\varepsilon$, where by  $\angle(\mathbf{w},\mathbf{w^\star})$ we denote the angle between the two vectors $\mathbf{w}$ and $\mathbf{w^\star}$.
    It is standard that there exist universal constants $c,C>0$ such that for all
    $\varepsilon\in(0,1)$,
    \[
        \Pr\bigl(\angle(\mathbf{w},\mathbf{w^\star})\le 2\varepsilon\bigr)
        \;\le\;
        (C\varepsilon)^{c(d-1)}.
    \]

    If $N_h$ neurons are sampled independently, a union bound shows that the probability
    that \emph{any} single neuron can $\varepsilon$-approximate $f$ is at most
    $N_h(C\varepsilon)^{c d}$.
    Thus, unless $N_h\ge e^{\Omega(d)}$, this probability vanishes exponentially in $d$.
\end{proof}
\end{document}